%% file: main.tex
\documentclass[10pt,twocolumn,letterpaper]{article}

\usepackage{cvpr}              %

\input{preamble}

\usepackage{multirow}
\usepackage{booktabs} %
\usepackage[table]{xcolor}
\usepackage{mathtools}
\usepackage{graphicx}   %
\usepackage{makecell}   %
\usepackage{booktabs}   %
\usepackage[percent]{overpic}  %
\definecolor{cvprblue}{rgb}{0.21,0.49,0.74}
\usepackage[pagebackref,breaklinks,colorlinks,allcolors=cvprblue]{hyperref}

\usepackage{thm-restate}

\def\paperID{41401} %
\def\confName{CVPR}
\def\confYear{2026}

\title{Designing to Forget: Deep Semi-parametric Models for Unlearning}

\author{Amber Yijia Zheng$^\star$ \hspace{.2cm} \and Yu-Shan Tai$^\star$ \hspace{.2cm} \and Raymond A. Yeh \hspace{.2cm} \and
Department of Computer Science, Purdue University\\
}

\begin{document}

\twocolumn[{%
\renewcommand\twocolumn[1][]{#1}%
\maketitle
\vspace{-0.65cm}
\input{figs/teaser}

\vspace{.4cm}
}]
\let\thefootnote\relax\footnotetext{$^\star$ indicates equal contribution}
\input{src/abs}
\input{src/intro}
\input{src/rel}
\input{src/prelim}

\input{src/app}

\input{src/exp}

\input{src/conc}

\clearpage

{
    \bibliography{ref}
    \small
    \bibliographystyle{ieeenat_fullname}
}

\clearpage

\input{src/supp}

\end{document}

%% file: preamble.tex
\usepackage{amssymb}%
\usepackage{pifont}%

\usepackage{symbols}
\input{math_commands}

\newcommand*{\ShowNotes}{} %
\input{macros}

\usepackage{amsmath,amssymb,amsthm}
\DeclareMathOperator{\Acc}{Acc}

%% file: math_commands.tex
\usepackage{amsmath,amsfonts,bm}

\def\1{\bm{1}}

\def\sp{space}

\def\vb{{\bm{b}}}

\def\vs{{\bm{s}}}

\def\vw{{\bm{w}}}
\def\vx{{\bm{x}}}
\def\vy{{\bm{y}}}
\def\vz{{\bm{z}}}

\def\mW{{\bm{W}}}

\DeclareMathAlphabet{\mathsfit}{\encodingdefault}{\sfdefault}{m}{sl}
\SetMathAlphabet{\mathsfit}{bold}{\encodingdefault}{\sfdefault}{bx}{n}
\newcommand{\tens}[1]{\bm{\mathsfit{#1}}}

\def\tX{{\tens{X}}}

\def\gD{{\mathcal{D}}}

\def\gL{{\mathcal{L}}}

\def\gN{{\mathcal{N}}}

\def\gQ{{\mathcal{Q}}}

\def\gS{{\mathcal{S}}}
\def\gT{{\mathcal{T}}}
\def\gU{{\mathcal{U}}}
\def\gV{{\mathcal{V}}}

\def\sR{{\mathbb{R}}}
\def\sS{{\mathbb{S}}}

\newcommand{\KL}{D_{\mathrm{KL}}}

\DeclareMathOperator*{\argmax}{arg\,max}
\DeclareMathOperator*{\argmin}{arg\,min}

%% file: macros.tex
\usepackage{soul}
\usepackage{multirow}

\definecolor{lightred}{rgb}{0.9,0.4,0.4}
\definecolor{darkred}{rgb}{0.7,0.1,0.1}
\definecolor{darkgreen}{rgb}{0.1,0.7,0.1}
\definecolor{cyan}{rgb}{0.7,0.0,0.7}
\definecolor{dblue}{rgb}{0.2,0.2,0.8}
\definecolor{maroon}{rgb}{0.76,.13,.28}
\definecolor{burntorange}{rgb}{0.81,.33,0}
\definecolor{tealblue}{rgb}{0.212,0.459, 0.533}
\definecolor{myyellow}{rgb}{0.8627451 , 0.75294118, 0.20784314]}
\definecolor{mypink}{rgb}{0.93359375, 0.62109375, 0.83984375}
\definecolor{pp}{rgb}{0.43921569, 0.18823529, 0.62745098}
\definecolor{rr}{rgb}{0.5254902 , 0.00784314, 0.12941176}
\definecolor{bb}{rgb}{0.09019608, 0.23529412, 0.37647059}
\definecolor{yy}{rgb}{0.49803922, 0.3372549 , 0.0}
\definecolor{gg}{rgb}{0.02352941, 0.3372549 , 0.17647059}
\definecolor{tblue}{HTML}{01847F}

\definecolor{retrain}{HTML}{D3D3D3}

\ifdefined\ShowNotes
  \newcommand{\colornote}[3]{{\color{#1}\bf{#2: #3}\normalfont}}
\else
  \newcommand{\colornote}[3]{}
\fi

\newcommand{\myparagraph}[1]{\vspace*{1.5pt}{\bf\noindent #1}}

\newcommand{\eat}[1]{} %

\newlength\savewidth

\makeatletter
\renewcommand{\paragraph}{%
  \@startsection{paragraph}{4}%
  {\z@}{0.3ex \@plus 1ex \@minus .1ex}{-1em}%
  {\normalfont\normalsize\bfseries}%
}
\makeatother

\newif\ifproofread

%% file: figs/teaser.tex
\small
\centering
\setlength{\tabcolsep}{1pt}
\begin{tabular}{ccc}
\includegraphics[width=0.31\linewidth]{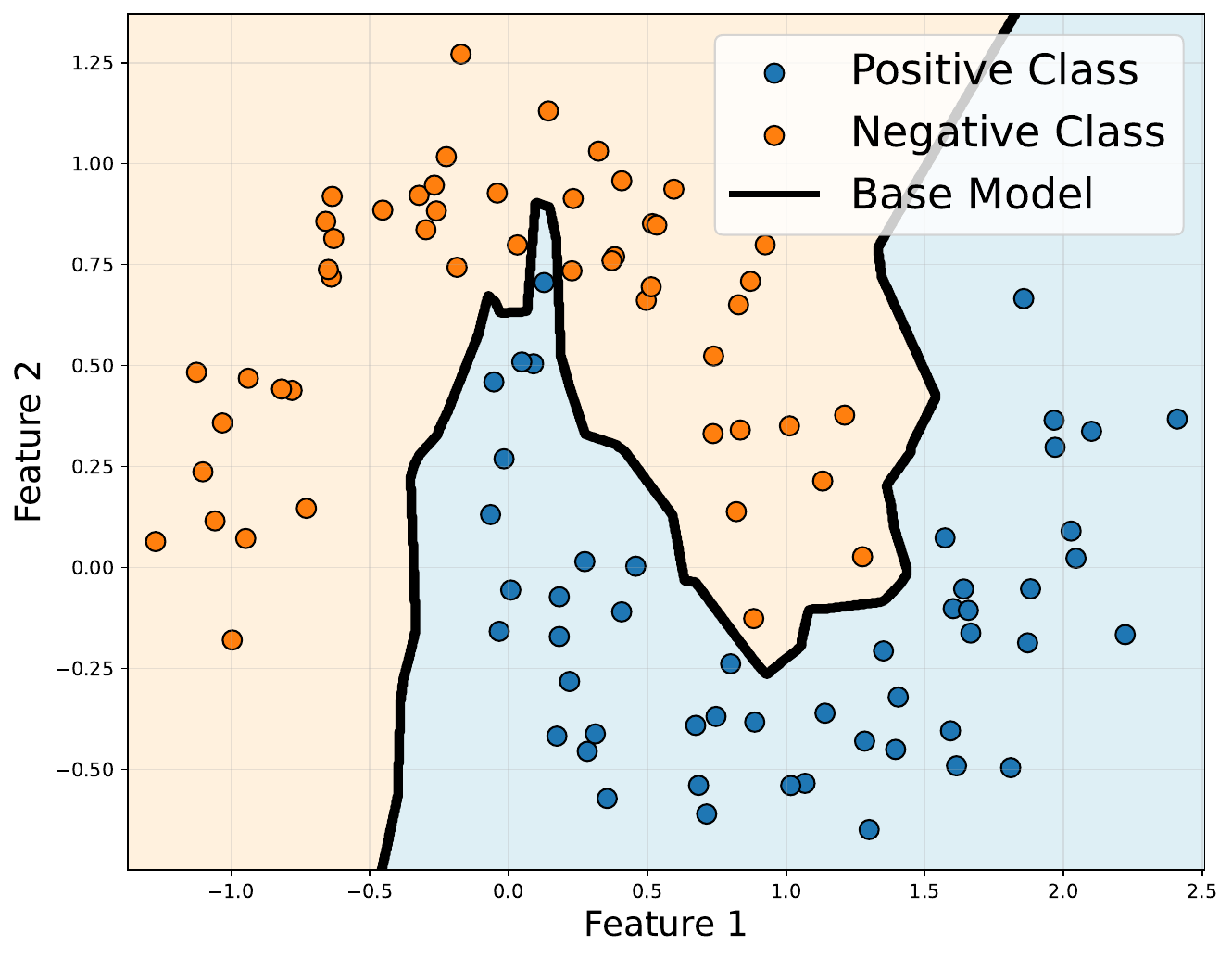} & 
\includegraphics[width=0.31\linewidth]{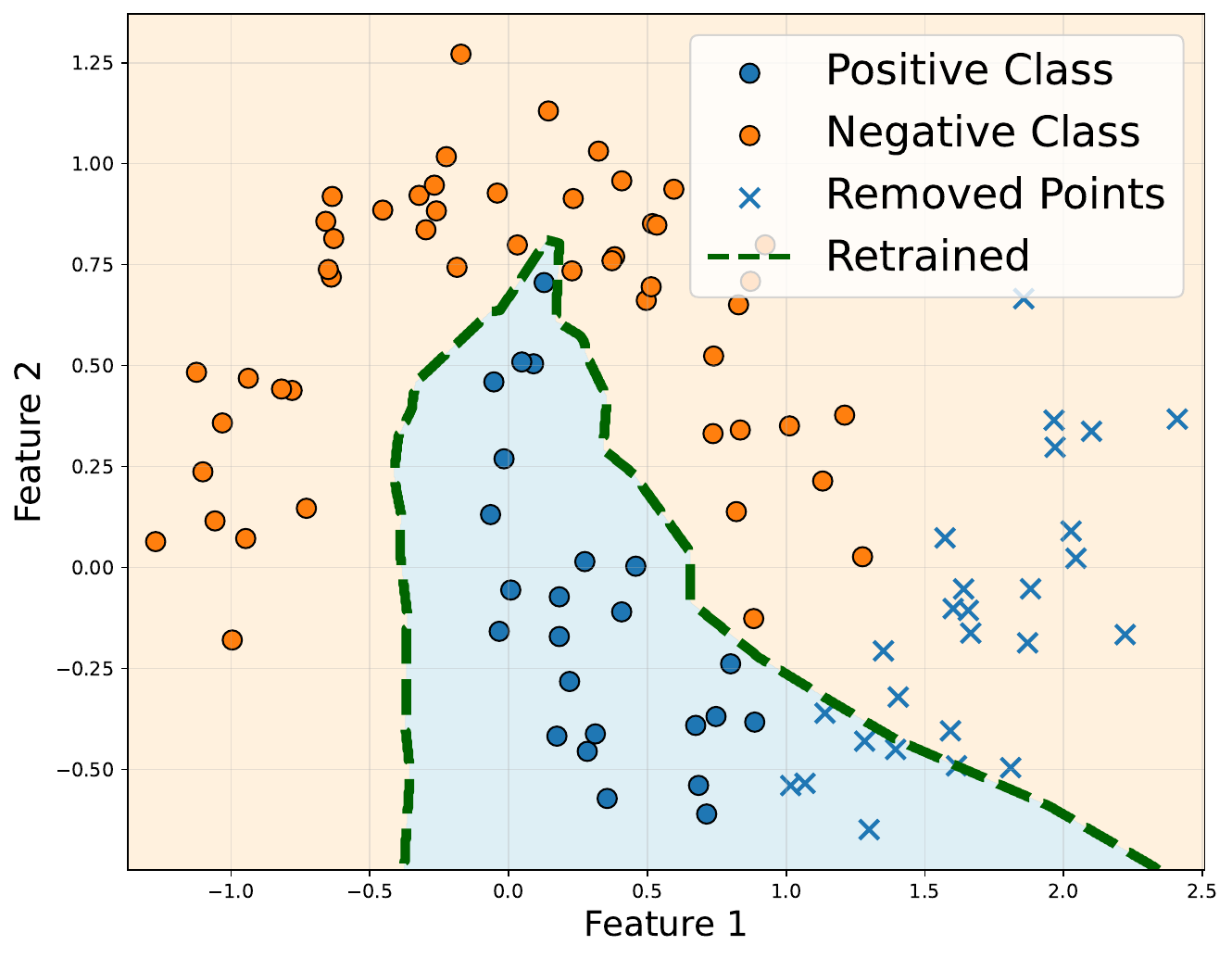} & 
\includegraphics[width=0.31\linewidth]{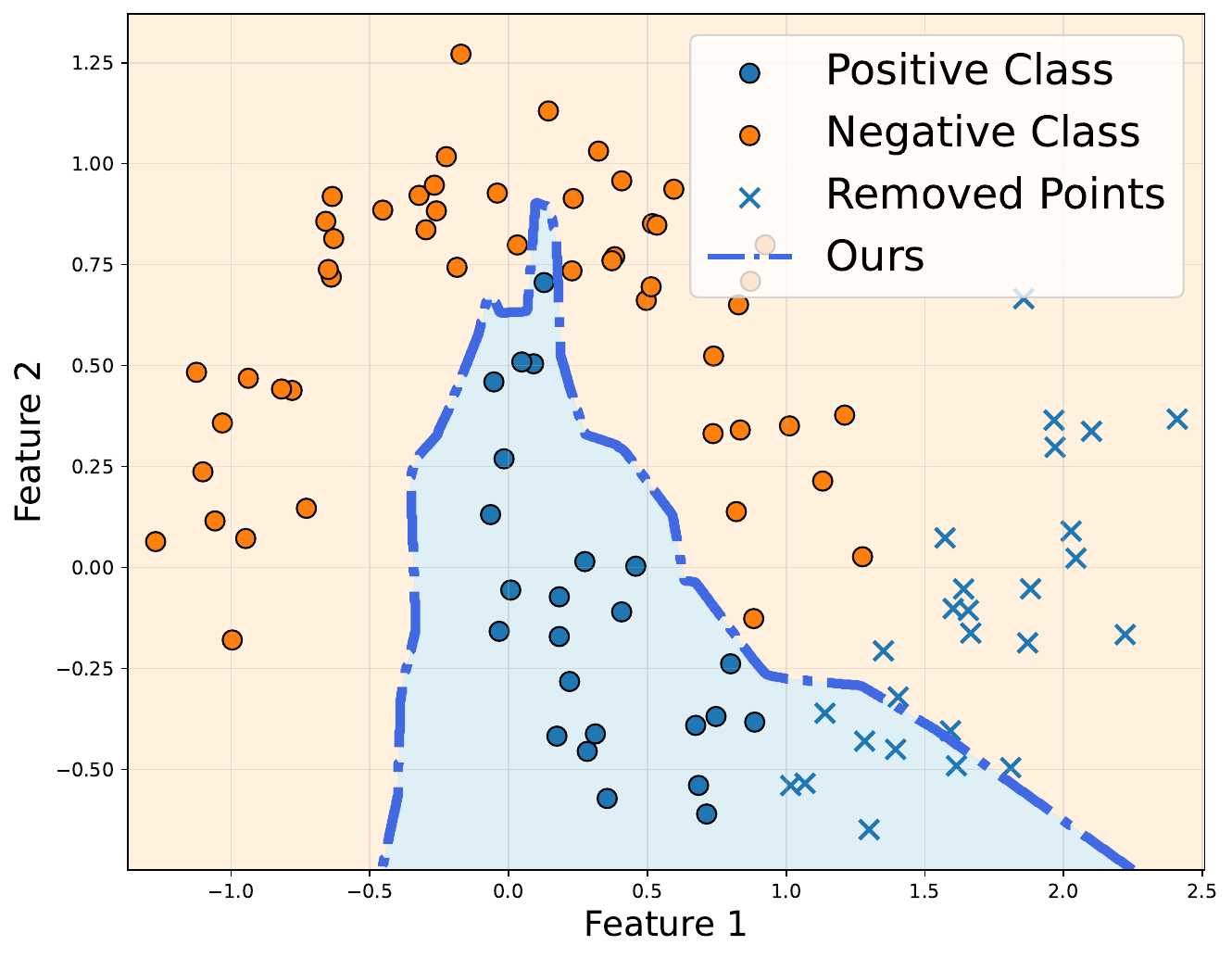}\\
\hspace{0.5cm} (a) Base SPM (Before Unlearning) & \hspace{0.5cm} (b) Unlearned by Retraining & \hspace{0.5cm} (c) Unlearned by Deletion (Ours)\\
\end{tabular}
\vspace{-0.2cm}
\captionof{figure}{{\bf Visualization of semi-parametric classifier's boundaries before and after unlearning.} Points shown in `$\tX$' are removed from the training set. We visualize: (a) the base semi-parametric model (SPM) trained on the full dataset; (b) the unlearned (oracle) SPM by retraining with samples removed; (c) the unlearned SPM by our test-time deletion. We observe that SPM's unlearned decision boundary by our method in (c) closely matches the one from retraining with the datapoints removed in (b).
}
\vspace{-0.1cm}
\label{fig:teaser}

%% file: src/abs.tex
\begin{abstract}
Recent advances in machine unlearning have focused on developing algorithms to remove specific training samples from a trained model. In contrast, we observe that not all models are equally easy to unlearn. Hence, we introduce a family of deep semi-parametric models (SPMs) that exhibit non-parametric behavior during unlearning. SPMs use a fusion module that aggregates information from each training sample, enabling explicit test-time deletion of selected samples without altering model parameters. Empirically, we demonstrate that SPMs achieve competitive task performance to parametric models in image classification and generation, while being significantly more efficient for unlearning. Notably, on ImageNet classification, SPMs reduce the prediction gap relative to a retrained (oracle) baseline by $11\%$ and achieve over $10\times$ faster unlearning compared to existing approaches on parametric models. The code is available at \url{https://github.com/amberyzheng/spm_unlearning}.
\end{abstract}

%% file: src/intro.tex
\vspace{-0.5cm}

\section{Introduction} \label{sec:intro}

With the growing awareness of data privacy, model safety, and regulatory compliance, such as the General Data Protection Regulation (GDPR)~\cite{EU2016GDPR}, there is an increasing interest in the ability to remove the influence of specific samples from a trained model. Machine unlearning (MU) mainly focuses on \textit{developing algorithms} for removal without retraining the model from scratch~\cite{nguyen2020variational,wu2022puma,guo2019certified,sekhari2021remember,neel2021descent}. Given a trained model and a set of samples to be unlearned, an MU algorithm fine-tunes the model as if it had been retrained without including these samples.

While MU has demonstrated promising results, it remains a challenging task due to the black-box nature of deep learning; it is nontrivial to disentangle the contribution of individual training samples to the learned model parameters. However, this is not true for all models. For non-parametric models, such as the $K$-nearest neighbor~\cite{cover1967nearest}, unlearning can be achieved by simply removing the samples from the training set. 
In this work, instead of focusing on \textit{how} to unlearn, we take a different perspective and ask:\\
\indent\textit{Can we {\bf design neural networks} that are fundamentally easier and more efficient to unlearn?} 

We propose Designing to Forget (DTF), a family of deep semi-parametric models (SPMs) suitable for unlearning. The main motivation for SPMs lies in their \textit{explicit} dependencies, where the predictions are directly influenced by specific training data points. In this case, unlearning can be done by deleting dependencies on data points during a forward pass. In~\figref{fig:teaser}, we illustrate an SPM applied to binary classification. We show that removing samples with test-time deletion produces a decision boundary that closely resembles the oracle, which is retrained from scratch without those samples.

Following prior work~\cite{fan2024salun,jia2023model,huang2024unified}, we validate our proposed SPMs on image classification and generation tasks. First, we show that SPMs' performance on these tasks is comparable to parametric models. We then show that SPMs are more efficient and effective at unlearning compared to parametric models with MU algorithms. %

{\bf\noindent Our main contributions are:}
\begin{itemize}[leftmargin=16pt,topsep=0em]
    \setlength{\itemsep}{0.0pt}
    \setlength{\parskip}{2.5pt}
    \item We introduce a design-centric approach to machine unlearning, focusing on neural network architectures rather than algorithms.
    \item We propose  Designing to Forge (DTF), a family of deep semi-parametric models (SPMs) that enable efficient unlearning through test-time deletion.
    \item We empirically show that SPMs match parametric models' performance on classification and generation tasks, while offering improved unlearning capability.
\end{itemize}

%% file: src/rel.tex
\section{Related Work} \label{sec:rel}

{\bf\noindent Semi-parametric models} combine learnable parametric functions with non-parametric components. Classical kernel methods, such as Gaussian processes~\cite{seeger2004gaussian} and deep kernel learning~\cite{wilson2016deep}, allow flexible function approximation with data-dependent inductive bias. 
Recent work has extended semi-parametric principles to deep learning. 
In vision, non-parametric modules have been used for conditional generation~\cite{qi2018semi}, retrieval-augmented image generation~\cite{blattmann2022retrieval,sheynin2023knndiffusion}, and open-set recognition~\cite{yoshihashi2019classification}.  These models benefit from explicit connections to either training or new samples at test time. In tabular domains, architectures such as NPT~\cite{kossen2021selfattention} apply cross-attention across training points to improve classification, leveraging pairwise interactions in a semi-parametric fashion. Different from these works, which aim to improve prediction or generation quality, our goal is to design SPMs that are inherently suitable for {\it unlearning}. 

\myparagraph{Machine unlearning}~\cite{cao2015towards} aims to enable the removal of a user’s private information from a trained model. The goal is to eliminate the influence of specific data samples, often due to privacy, legal, or ethical considerations. Approximate unlearning approaches aim to achieve this by directly modifying the pre-trained model using the data to be erased, \textit{without retraining} from scratch~\cite{nguyen2020variational,wu2022puma,guo2019certified,sekhari2021remember,neel2021descent}. This task has been studied across various domains, including classification~\cite{chen2023boundary,goel2022towards,golatkar2020eternal,fan2024salun,jia2023model}, regression~\cite{mahadevan2021certifiable,tarun2023deep}, and generative modeling~\cite{gandikota2023erasing,fmn,huang2024unified,heng2023selective,fan2024salun,gandikota2024unified}. In the context of text-to-image models, recent work has focused on erasing specific concepts from generative systems. This includes inference-time interventions~\cite{brack2023sega,schramowski2023safe}, fine-tuning-based unlearning of diffusion models~\cite{gandikota2023erasing,kim2023towards,kumari2023ablating}, and direct model editing approaches~\cite{fmn,gandikota2024unified}. More related to this work, \citet{bourtoule2021machine} propose to unlearn by adapting the data. They focus on merging a set of models trained on disjoint data splits and achieve unlearning by deleting \textit{part of the model} trained on the unlearned data.

While prior methods have demonstrated the feasibility of approximate unlearning, they require explicit fine-tuning or model editing on the pre-trained weights, incurring an overhead, particularly in scenarios requiring frequent unlearning. In contrast, we propose SPMs that enable unlearning through test-time deletion on specific datapoints, without any modification to the learned parameters.

\myparagraph{AI safety}
has received increasing attention with rapid progress in generative models~\cite{brundage2018malicious,marchal2024generative,ISRSAA2025,gyevnar2025ai,barez2025open}. A key dimension is data governance: machine unlearning (MU) removes the influence of specific training samples from pre-trained models~\cite{liu2025rethinking,shaik2024exploring,wang2024machine}, motivated by privacy and compliance considerations, including links to differential privacy~\cite{guo2019certified,neel2021descent,sekhari2021remember,ginart2019making,ullah2021machine}. Complementarily, prior work on model immunization~\cite{zheng2024imma,zheng2024learning,zheng2025multi,zheng2025model} develops defenses that harden open-source image classification and diffusion models against harmful adaptations.

%% file: src/prelim.tex
\section{Preliminaries} \label{sec:prelim}
We briefly review the necessary concepts and notation.

\myparagraph{Parametric vs. non-parametric.}
Consider a generic machine learning setup~\cite{murphy2022probabilistic}, given a training set $\gT = \{(\vx, \vy)\}$ with input $\vx$ and ground-truth $\vy$, the goal of \textit{parametric} modeling is to learn the parameters $\theta$ of a model $F_\theta$ that minimizes a loss function $\gL$ of the form
$\min_\theta \gL(\gT, \theta) \triangleq \min_\theta \sum_{(\vx,\vy) \in \gT} \ell(F_\theta(\vx), \vy).$
Here, $\ell$ is a suitable loss function for the task.
At test-time, a parametric prediction 
\bea\label{eq:parametric_pred}
\hat\vy_{\tt p} \triangleq F_{\theta^\star}(\vx)
\eea
relies \textit{solely} on the test sample $\vx$ and the learned parameters $\theta^\star$. This means that all the information from the training set $\gT$ is implicitly encoded within the model parameters $\theta^\star$. Consequently, it is not straightforward to parse out the contribution of an individual training data point $\vx$ from $\theta^\star$.

In contrast, non-parametric models~\cite{wasserman2006all,simonoff2012smoothing} make predictions by explicitly referencing the training set $\gT$, which remains available at test time. Commonly, non-parametric models $H(\vx, \gT)$ utilize a subset of the training data, \ie, a neighborhood $\gN(\vx;\gT)$, based on the test data $\vx$ and aggregate their labels to make a prediction
\bea
\label{eq:nonparametric}
\hat\vy_{\tt np}
\triangleq  H(\vx, \gT) = \sum_{(\vx_i,\vy_i)\in \gN(\vx;\gT)} k(\vx_i, \vx)\,\vy_i,
\eea
where the weights
$k(\vx_i, \vx)\ge 0$, and $\sum_i k(\vx_i, \vx)=1$,
 are determined by a similarity kernel. With the explicit form in~\equref{eq:nonparametric}, the contribution of each training example can be reasoned through $k(\vx_i, \vx)$.
 
\myparagraph{Machine unlearning.} Given a set of samples to be unlearned $\gU$, the oracle unlearned parameters $\theta'$ are obtained by retraining a model without $\gU$, \ie, 
\bea\label{eq:unlearn}
\theta' \triangleq \argmin_\theta \gL(\gT \setminus \gU, \theta).
\eea
In practice, retraining according to~\equref{eq:unlearn} is computationally expensive. Therefore, unlearning algorithms are designed to efficiently update the parameters of a trained model, $\theta^*$ using $\gU$ to produce an approximation $\tilde{\theta}$ of the oracle unlearned parameters. That is, the updated model’s output should closely match that of the oracle model:
\bea\label{eq:unlearn_goal}
F_{\tilde{\theta}}(\vx) \approx F_{\theta'}(\vx), \quad \forall \vx.
\eea
Importantly, the unlearning procedure must be more efficient than retraining from scratch as in~\equref{eq:unlearn}.

%% file: src/app.tex
\input{figs/pipeline}

\section{Designing to Forget} \label{sec:approach}
Our goal is to design effective models that are \textit{inherently suitable for unlearning}. As motivation, unlearning in $K$-nearest neighbors (KNN) is straightforward: a datapoint can be removed by simply excluding it from the set of neighbors at test-time. This simplicity highlights the advantage of non-parametric models in unlearning. However, non-parametric models often fall short in a strong task performance. To address this gap, we propose semi-parametric models (SPMs), which have the ease of unlearning of non-parametric models with the strong task performance of parametric models.

Different from parametric $F_\theta(\vx)$ in~\equref{eq:parametric_pred}, at test-time, a trained SPM $G_{\theta^\star}$ takes in the test data $\vx$, \textit{and} the training dataset $\gT$. An SPM's prediction has the form
\bea\label{eq:semi-parametric}
\hat{\vy} = G_{\theta^\star}(\vx, \gT),
\eea
which explicitly depends on the training data $\gT$, resembling a non-parametric prediction in~\equref{eq:nonparametric}. For clarity, we refer to the test-time provided $\gT$ in~\equref{eq:semi-parametric} as ``the inputted set''.

\myparagraph{Unlearning through test-time deletion.}
As a result, unlearning the set $\gU$ of SPM can be done by deleting that data at test-time:
\bea\label{eq:semi-parametric-unlearn}
\hat{\vy} = G_{\theta^\star}(\vx, \gT \setminus \gU)
\eea
without modifying the trained parameters $\theta^\star$.
{\it There is a caveat}: this approach {\bf does not hold} for all SPMs. Consider an extreme case, where the model $G_\theta$ disregards the inputted set $\gT$, then the SPM prediction in~\equref{eq:semi-parametric} falls back to a parametric model. 

In this work, we propose a set of design principles and effective modules to build SPMs $G_\theta$ that can be unlearned efficiently by using test-time deletion in~\equref{eq:semi-parametric-unlearn}, while matching the competitive performance to existing parametric models. %
In~\secref{sec:spm_unlearn}, we start with an abstract SPM framework, followed by specific instantiations of SPM for classification and generative tasks in~\secref{sec:class} and~\ref{sec:generative}. 

\subsection{SPM architectures for unlearning}
\label{sec:spm_unlearn}
Our proposed deep SPM consists of three modules:
\begin{enumerate}[nosep,label=\alph*), leftmargin=16pt, topsep=0em]
\setlength{\itemsep}{0.0pt}
\setlength{\parskip}{2.5pt}
\item Fusion module $g: {\sR^d \times \sS \rightarrow \sR^d}$ that fuses the parametric and non-parametric branch by mapping a pair consisting of a vector and a set to a vector, where $\sS$ denotes the space of sets $\{\vs_1, \vs_2, \dots \}$ with each $\vs_i \in \sR^d$.
\item Non-parametric module $h: \sS \rightarrow \sS$ that transforms a set into another set. This module processes each of the datapoints in $\gT$ into an ``instance embedding'' for fusion.
\item Parametric module $f: \sR^d \rightarrow \sR^d$ that maps a vector to another vector through standard deep-net layers.
\end{enumerate}
\vspace{1pt}
For readability, we use the same dimensionality $d$ for the inputs and outputs of all layers, and omit explicit notation for learnable parameters. 

Using these three types of module $g$, $h$, and $f$, we construct the semi-parametric model $G_\theta(\vx, \gT)$. An illustration of SPM for a diffusion model is provided in~\figref{fig:pipeline}. The architecture consists of two branches:
An input branch (shown in blue) and a set branch (shown in orange). At each layer in the input branch, features from the set branch are fused to generate latent features $\vz^{(l)}$. At each set branch's layer, it iteratively computes instance features $\gS^{(l)}$ over the input dataset $\gT$. 

Formally, an SPM's forward pass is given by:
\bea\nonumber
&\vz^{(l+1)} = f^{(l)}\left(g^{(l)}(\vz^{(l)}, \gS^{(l)})\right) \quad \text{and} \\ 
\label{eq:semi-forward}
&\gS^{(l+1)} = h^{(l)}(\gS^{(l)}), \quad \forall l \in \{0, \dots, L\},
\eea
where the superscript $(l)$ indicates the layer index. We initialize with $\vz^{(0)} \triangleq \vx$ and $\gS^{(0)} \triangleq \gT$. We now elaborate on the high-level design of each module.

\myparagraph{Fusion module $g$.}
This module connects the parametric and non-parametric components of the model. For unlearning to be effective, the fusion module must meaningfully incorporate the set of instance embeddings $\gS$. To encourage this behavior, we design $g$ to be a weighted combination:
\bea\label{eq:fusion}
g(\vz, \gS) \triangleq \sum_{\vs_i \in \gS \setminus \{\vs_z\}} \alpha(\vz, \vs_i) \cdot \vs_i,
\eea
where $\alpha(\cdot) \in \sR$ is a function that returns a scalar weight between the input latent $\vz$ and  $\vs_i$ instance embedding corresponding to the $i$-th input $\vx_i \in \gT$. During training, the latent $\vz$ is extracted from the training set. Without loss of generality, we denote the associated sample as the $z$-th datapoint $\vx_z$. We \textbf{exclude} the corresponding instance embedding $\vs_z$ from the summation in~\equref{eq:fusion}.

The intuition is to encourage the model to learn features \textit{relative to} the rest of the dataset, similar to the spirit of non-parametric methods. This prevents the model from ignoring other datapoints and relying solely on the parametric modules. Additionally, the fusion in~\equref{eq:fusion} is permutation invariant because summation is orderless, \ie, permuting the indices of $\vs_i \in \gS$ yields the same outcome, which improves data efficiency during training.

\myparagraph{Non-parametric module $h$.} This module transforms the input training set $\gT$ into instance embeddings $\gS^{(l)}$ to preserve non-parametric behavior. In~\equref{eq:semi-forward}, $h^{(l)}$ denotes the $l$-th set-branch layer, implemented as a \emph{shared} instance-wise transformation on $\gS^{(l)}$ for efficiency and permutation equivariance. The module $h$ is trained end-to-end with $f$ and $g$: it encodes samples in $\gT$ into instance features that are fused with parametric representations through $g$. The main design requirement for this module is permutation equivariance. That is, a permutation on the input elements would result in the same permutation at the output, formally,
\bea
h([\vs_{\pi(1)},\dots, \vs_{\pi(|\gS|)}]) = [h_{\pi(1)}, \dots, h_{\pi(|\gS|)} ],
\eea
where $\pi$ denotes any permutation of the element's indices.

Architectures that are permutation equivariant are well studied, such as deep sets~\cite{zaheer2017deep}, graph networks~\cite{battaglia2018relational} (with a complete graph), transformer-based architectures~\cite{vaswani2017attention,lee2019set} (treating each datapoint as a token). For a large dataset, maintaining this non-parametric module is expensive. Hence, we propose a clustering or retrieval-based method to reduce the size of the set $\gS$. The exact architecture and reduction method are task dependent, which we defer to~\secref{sec:class} and~\secref{sec:generative}.

\myparagraph{Parametric module $f$.} For this module, any existing parametric deep-net can be used. In fact, we deliberately design SPM~\equref{eq:semi-forward} in a two-branch structure, such that pre-trained parametric models can be adapted into SPMs.

\myparagraph{Label-permutation augmentation.}
Despite the careful design, it is still possible for SPM to ignore the input training dataset $\gT = \{(\vx,\vy)\}$. For the class-conditioned generation task, the input class label $\vy$ is represented with a one-hot vector. We find that the model would ignore $\vx$ and use the one-hot vectors as a ``bias'' term to turn the non-parametric branch into a parametric one. To prevent such behavior, we shuffle the index of the one-hot vector during training. At each gradient update, a randomly generated permutation matrix is applied to $\vy$ for all data. This forces the model to use the image $\vx$ from the training set, as just $\vy$ alone does not contain enough information.

\input{src/spm_class}
\input{tabs/std_cls}
\input{src/spm_gen}

%% file: figs/pipeline.tex
\begin{figure*}[t]
    \centering
    \includegraphics[width=.82\linewidth]{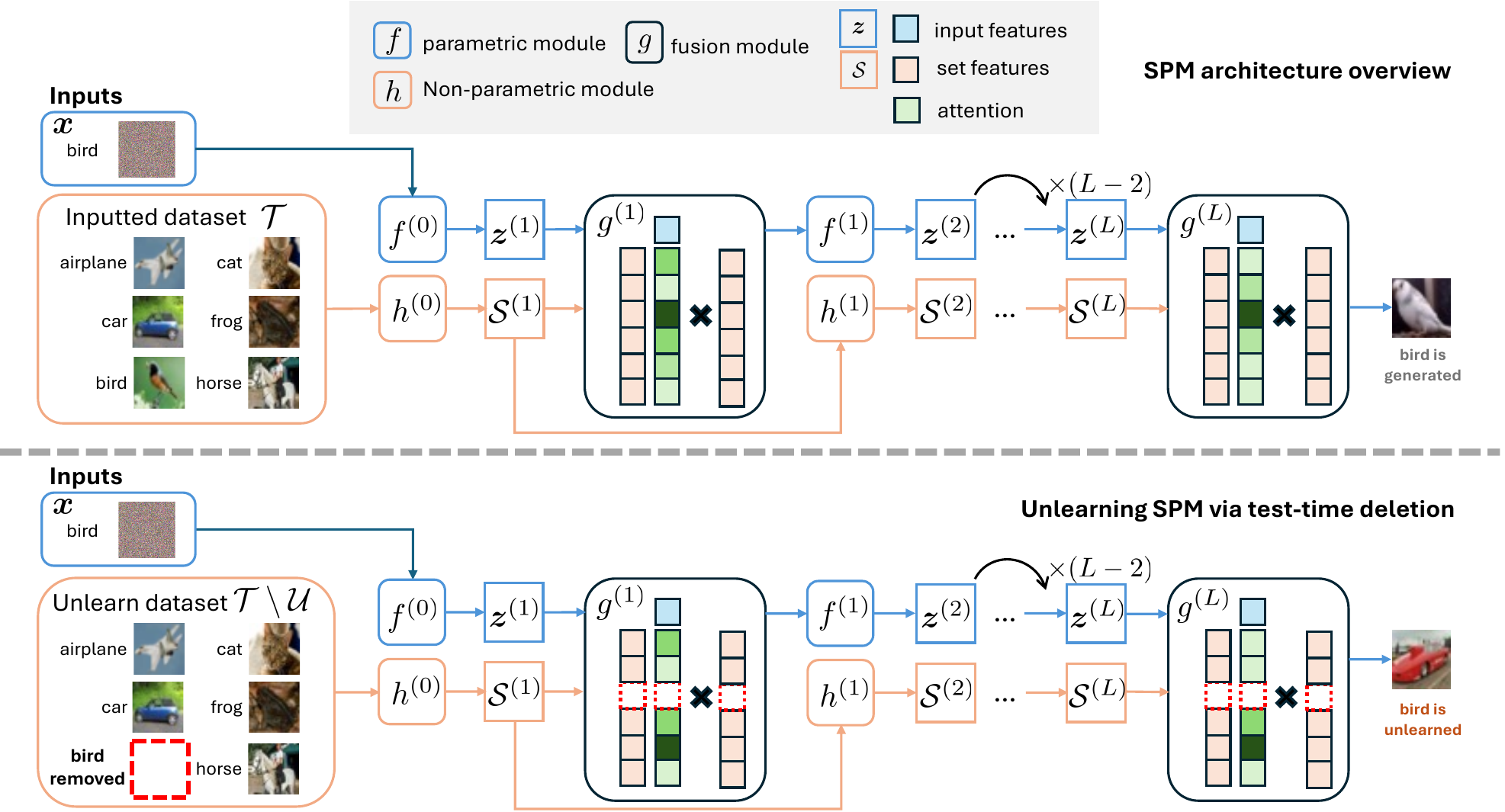}
    \vspace{-0.1cm}
    \caption{{\bf Unlearning with Generative SPM.} It consists of three types of layers: (a) \textit{Fusion module} $g$ connects parametric and non-parametric modules; (b) \textit{Non-parametric module} $h$ maps a set to another set; (c) \textit{Parametric module} $f$ maps a vector to a vector. The first row illustrates a standard pipeline for generating a `bird'. The second row presents unlearning `bird' via test-time deletion.
    }
    \label{fig:pipeline}
    \vspace{-0.4cm}
\end{figure*}

%% file: src/spm_class.tex
\subsection{SPM for image classification}
\label{sec:class}
As motivated, our proposed design allows existing parametric models to be adapted into SPMs via supervised fine-tuning. For image classification, the training set $\gT=\{(\vx_i,\vy_i)\}$ consists of images $\vx_i$ and the corresponding labels $\vy_i$. In our experiment, we use a ResNet~\cite{he2016deep} as the parametric module $h$ to extract the input latent $\vz^{(L)}$, which corresponds to the features from the last block. 

For the non-parametric module, we use the simplest permutation equivariant model, which processes each training sample independently and concatenates with its label, \ie,
\bea\label{eq:nonparam_embedding}
\gS^{(l)} \triangleq \left\{\left[h^{(l)}(\vs_i^{(l)}), \vy_i\right]\right\}_{i=1}^{|\gT|},
\eea
where the first layer $\vs_i^{(0)}\triangleq \vx_i$. 
Since the same $h^{(l)}$ is \textit{shared across} the set elements, this module is permutation equivariant. 
We also implemented a more expressive non-parametric module using a class-aware GNN with multi-head graph attention, which improves CIFAR-10 accuracy from 94.1\% to 94.4\%, highlighting the benefit of structured relational modeling and the generality of our framework.

To make a final prediction, we fuse the set of instance embeddings $\gS^{(L)}$ with the input latent $\vz^{(L)}$, using an attention-like mechanism (excluding its own element):
\bea
\hat{\vy} = \sum_{[\vs_i^{(L)}, \vy_i] \in \gS^{(L)}  \setminus  \{[\vs_z^{(L)}, \vy_z]\} } 
\alpha(\vz^{(L)},\vs_i^{(L)}) \cdot \vy_i,
\eea
where the weights
\bea
\alpha(\vz^{(L)},\vs_i^{(L)}) \triangleq \frac{\exp\left(
(\mW_{\tt q}^{(L)}\vz^{(L)})^\top (\mW_{\tt k}^{(L)}\vs_i^{(L)})\right)}{\sum_{j}\exp\left((\mW_{\tt q}^{(L)}\vz^{(L)})^\top (\mW_{\tt k}^{(L)}\vs_j^{(L)})\right)}
\eea
is similar to a cross-attention layer, where $\vz^{(L)}$ is the query, and $\vs^{(L)}$ is the key.

Lastly, we improve the efficiency of the model by reducing the size of $\gS$ in two ways. First, a retrieval-based approach where we approximate $\gS$ by retrieving the nearest neighbors. Second, a clustering approach where we create ``average instances'' $\bar\vs_c$ by taking an average over all the $\vs_i$ with class label $c$. Both of which reduce the size of $\gS$ to a constant size. 

%% file: tabs/std_cls.tex
\begin{table}[t]
\centering
\caption{{\bf Classification accuracy on CIFAR10 and ImageNet.} We experiment with subsampling 5\%, 10\%, and 100\% of the training data, either using (R)etrival or (C)lustering based method.
}
\vspace{-0.2cm}
\resizebox{0.48\textwidth}{!}{
\begin{tabular}{l|cc|cc}
\specialrule{.15em}{.05em}{.05em}
\multirow{2}{*}{} & \multicolumn{2}{c|}{\textbf{CIFAR10}} & \multicolumn{2}{c}{\textbf{ImageNet}} \\
 &Acc. $\uparrow$ & Run Time (s) $\downarrow$ & Acc. $\uparrow$ & Run Time (s\textbf{)} $\downarrow$\\
\hline
ResNet18 & 94.9 &   0.7& 68.93 &  26.1 \\
\hline
ResNet18-KNN (5\%) & 94.5$_{\pm\text{0.02}}$&   1.1&  61.5$_{\pm\text{0.08}}$& 34.4\\ 
ResNet18-KNN (10\%) & 94.5$_{\pm\text{0.05}}$&  1.1 & 63.4$_{\pm\text{0.23}}$ &34.5 \\ 
ResNet18-KNN (100\%) & 94.5 &   1.5&  66.9  & 63.7\\ \hline
SPM-R (5\%) &   94.1$_{\pm\text{0.02}}$ & 1.0&   52.0$_{\pm\text{0.12}}$ &  33.8\\
SPM-R (10\%) &    94.1$_{\pm\text{0.03}}$ & 1.1&  54.8$_{\pm\text{4.16}}$  & 34.2\\
SPM-R (100\%) &   94.1 &2.1& 59.9    & 61.2 \\
SPM-C (5\%) &   94.5$_{\pm\text{0.02}}$  & 0.7 &   66.1$_{\pm\text{0.07}}$  &  31.3 \\
SPM-C (10\%) &   94.5$_{\pm\text{0.02}}$ &  0.7 &   66.6$_{\pm\text{0.12}}$ & 31.3  \\
SPM-C (100\%) &   94.5  &  0.7 &   67.1  & 31.3   \\
\specialrule{.15em}{.05em}{.05em}
\end{tabular}}

\label{tab:std_cls}
\vspace{-0.5cm}
\end{table}

%% file: src/spm_gen.tex
\subsection{SPM for image generation}
\label{sec:generative}

We consider the class-conditioned image generation task, where the model is given class-label $\vy_i$ to generate an image $\vx_i$ with the respective class. Our SPM design is based on the popular UNet~\cite{unet} architecture used in diffusion models. The architecture consists of a series of down/mid/up blocks. We treat the down and up blocks as the parametric module $f$. Specifically, the down blocks encode the input image into a compact latent representation $\vz^{(l)} = f^{(l-1)}\circ \cdots \circ f^{(1)}(\vx)$, while the up blocks decode the latents to denoise the image, \ie, $\hat \vx = f^{(L)} \circ \cdots \circ f^{(l)}(\vz^{(l+1)})$. 

For the non-parametric module, we followed the same design in~\equref{eq:nonparam_embedding}, which independently processes each datapoint to produce an instance embedding set $\gS^{(l)}$. In our implementation, patch-level features are extracted from the UNet encoder down blocks, and each training instance in $\gT$ is encoded independently to obtain its corresponding patch representations. To integrate the two branches, we replace the mid block of UNet with a fusion module. 

The fusion module's design aims to spatially align the local regions of the parametric feature and the instance embeddings. To do so, we perform the fusion at a patch level. Let $\vz^{(l)}[p] \in \sR^{d'}$ denote the $p$-th patch for latent $\vz^{(l)}$ and $\vs_j^{(l)}[p]$ to denote the $p$-th patch of the $j$-th instance embedding. We denote the set of all instance embedding patches from an image with class $c$ as
\bea
\tilde{\gS}^{(l)}_c \triangleq \bigcup_{j=1}^{|\gS|} \left\{[\vs_j^{(l)}[p], \vy_j=c] \right\}_{p=1}^{P_S},
\eea
where $P_S$ is the number of patches extracted from each $\vs^{(l)}$.
The fused latent patch  $\vz^{(l+1)}[p]$ is given by
\bea
 \sum_{\vs^{(l)}_j[p] \in \gQ}
\alpha\left(\vz^{(l)}[p], \vs^{(l)}_j[p] \right) \cdot \left(\mW_{\tt k}^{(l)} \vs_j^{(l)}[p]\right),
\eea
where 
$\gQ \triangleq \tilde{\gS}_c^{(l)} \setminus \left\{[\vs_z^{(l)}[p],\vy_z] \right\}.
$
Recall, we use $z$ to denote the index of the sample $(\vx_z,\vy_z)$ for which $\vz$ is extracted from. That is, we exclude the sample for which the latent is extracted from the attention.
The weight $\alpha$ follows a variant of Bahdanau attention~\cite{bahdanau2014neural} %
defined as:
\bea
\texttt{Softmax}{\left(\vw^\top \cdot \kappa (
\mW_{\tt q}^{(l)}\vz^{(l)}{[p]} + \mW_{\tt k}^{(l)}\vs^{(l)}_j[p] + \vb^{(l)})\right)},
\eea
where $\texttt{Softmax}$ is normalized over $\gQ$ and $\kappa$ is the ReLU activation, and $\vw$ is a linear layer shared by all the patches.

%% file: src/exp.tex
\input{tabs/std_ddpm_cifar}
\section{Task Performance Experiments} \label{sec:task_exp}

To justify the need for unlearning SPMs, we first demonstrate that SPM's performance in image classification and generation is competitive with that of existing parametric models. The unlearning experiments are presented in~\secref{sec:unlearn_exp}. Implementation details are in Appendix~\secref{sec:supp_implemet_detail}.

\input{tabs/cls_cifar}
\subsection{Image classification}
\myparagraph{Setup \& baselines.}
We conduct experiments on CIFAR-10~\cite{krizhevsky2009learning} and ImageNet-1K~\cite{deng2009imagenet}. To evaluate the models, we report the top-1 accuracy (Acc.) and runtime over the test set.  We consider the baselines of: {\bf (a)} a parametric model (ResNet18~\cite{he2016deep}), {\bf (b)} a non-parametric model of KNN~\cite{cover1967nearest} using features from ResNet18, and {\bf (c)} our SPM built on top of a ResNet18 using the retrieval (SPM-R)/clustering (SPM-C) approach. Finally, we study how the training set's size $|\gT|$ at test-time affects performance.

\myparagraph{Classification results.}
\tabref{tab:std_cls} reports the accuracy and runtime for each of the methods. Compared to the parametric ResNet, we observe that SPM-C achieves competitive performance in both accuracy and runtime. Compared to the non-parametric KNN, SPM-C achieves higher accuracy and a lower runtime. Notably, our method requires only half the runtime of ResNet18-KNN (31.3s vs. 63.7s). 

Next, we observe that using a larger $\gT$ at test-time improves model performance, for both the KNN and our SPMs. This indicates that SPM can utilize the additional training set samples at test-time. Finally, when comparing SPM-R and SPM-C, we observe that clustering achieves lower runtime and improved accuracy. Overall, these results suggest that SPM matches the performance of a parametric model \textbf{and} exhibits similar non-parametric dependencies to the training set at test-time, \ie, SPMs leverage the inputted set $\gT$ to make predictions.

\input{tabs/cls_imagenet}

\subsection{Class-conditioned image generation}
\myparagraph{Setup \& baselines.}
We evaluate SPM on the CIFAR-10 dataset based on DDPM~\cite{ddpm}. We report the Fréchet Inception Distance (FID)~\cite{heusel2017gans}, and the generation time is for 1,000 images for evaluation. We use the same U-Net as in DDPM. For a non-parametric baseline, we compare with the Gaussian Mixture Model (GMM)~\cite{dempster1977maximum}, which shares similar characteristics to KNN but can be directly applied to generative modeling.

\myparagraph{Generation results.} \tabref{tab:std_ddpm_cifar} reports the FID on CIFAR-10. We observe that when the size of the inputted set $\left|\gT\right| = 50$, SPM achieves a lower FID than DDPM while maintaining a reasonable generation speed.
In general, as $\left|\gT\right|$ increases, the FID score consistently decreases but at the cost of a longer runtime. As for GMM, while it has a fast generation time, its FID is much worse, indicating that it fails to capture the complex data distribution of CIFAR-10.

\section{Unlearning Performance Experiments} \label{sec:unlearn_exp}
We strictly follow prior works' evaluation to demonstrate that SPMs can be unlearned more easily and efficiently, via test-time deletion.

\subsection{Unlearning for classification}
Following prior work~\cite{fan2024salun,jia2023model}, we evaluate class-wise unlearning %
using CIFAR-10 and ImageNet-1K. Additional experiments on unlearning random subsets are in Appendix~\secref{sec:random_unlearn}.

\input{src/unlearn_metric}

{\bf\noindent Baselines.} Experiments are conducted on ResNet18, for which we adapted our SPM.
For \textit{parametric} models, we compare eight MU algorithms: \textbf{(1)} gradient ascent (GA)~\cite{thudi2022unrolling}
that reverses training using gradient ascent on unlearned data, \textbf{(2)} fine-tuning (FT)~\cite{warnecke2021machine} that
fine-tunes the model on the remaining data, \textbf{(3)} influence unlearning (IU)~\cite{koh2017understanding} that leverages influence
functions, two boundary unlearning methods, \textbf{(4)} boundary expansion (BE) and \textbf{(5)} boundary shifting (BS)~\cite{chen2023boundary}, \textbf{(6)} $\ell_1$-sparsity~\cite{jia2023model}  that introduces weight sparsity for unlearning, \textbf{(7)} SalUn~\cite{fan2024salun} and \textbf{(8)} MUNBa~\cite{wu2025munba} that aim to erase the influence of unlearned data in models.
For \textit{non-parametric} models, we compare with KNN using ResNet18's last layer embedding. Finally, we report the performance of oracle unlearned models, which are retrained from scratch.

\myparagraph{Results on CIFAR-10.}
\tabref{tab:cls_cifar} reports the results for unlearning one class and five selected classes. We observe that SPM achieves the lowest soft prediction gap, demonstrating strong similarity between the unlearned and oracle (retrained) model.
SPM also yields the smallest $\Delta$UA and $\Delta$RA, along with a minimal $\Delta$TA, indicating its effectiveness in forgetting targeted concepts while preserving performance on the remaining classes. Also, SPM has the lowest unlearning time. Unlike parametric methods that require iterative updates, neither KNN nor SPM modifies the model weights. Hence, unlearning time only involves indexing the data points, which is less than a second.

\myparagraph{Results on ImageNet-1K.}
\tabref{tab:cls_imagenet} presents the results for unlearning on ImageNet-1K. As in CIFAR-10, SPM outperforms all baselines across all metrics. Notably, SPM achieves a low performance gap. SPM maintains near-zero values for $\Delta$UA and $\Delta$RA, and $\Delta$TA. In terms of unlearning time, prior methods often require multiple hours; SPM and KNN are significantly faster, as these models do not require updates in the parameters.

\subsection{Unlearning for generation.}
\myparagraph{Baselines.}
Following~\citet{huang2024unified}, we evaluate class-conditioned unlearning on CIFAR-10. %
Our method is compared against SalUn~\cite{fan2024salun}, along with two other gradient-based approximate unlearning approaches: Selective Amnesia (SA)~\cite{heng2023selective} and SFR-on~\cite{huang2024unified}. 

\myparagraph{Metrics.} Prior works report unlearning time ($\text{Time}_{\tt MU}$) and  performance gap ($\Delta$) between retrained/oracle model, where performance is measured by the following: 
\begin{itemize}
    \item \textit{Unlearning Accuracy (UA):} A trained classifier $C_{\tt p}$ is used to measure the accuracy on the set of images generated by the model for the unlearned class. An image is successfully unlearned if the classifier does {\it not} predict the unlearned class.

    \item \textit{Fréchet Inception Distance ($\text{FID}_{\tt R}$):} Prior work reports the FID on the ${\tt R}$emaining classes of an unlearned model. The FID is computed with respect to the real images from the dataset. Intuitively, an unlearned model should generate high-quality images for the remaining classes.
\end{itemize}
Next, we propose {\it ${\tt O}$racle Fréchet Inception Distance ($\text{FID}_{\tt O}$)}, which measures the FID between the generated images from the unlearned model and those from the oracle model for all the classes. %
This metric quantifies how closely the unlearned model matches the oracle, providing a more direct measure of unlearning effectiveness.

\myparagraph{Quantitative results.} We report class‐conditioned unlearning performance in~\tabref{tab:ddpm_cifar}. We found SPM consistently achieves the smallest $\text{FID}_{\tt O}$, showing it matches the oracle model's behavior.  Some prior works resort to degrading the unlearned classes, thus resulting in a large $\text{FID}_{\tt O}$ (more than 30).  Additionally, SPM maintains a low $\Delta$UA (below 0.1) and $\Delta$FID$_{\tt R}$ (below 0.5), indicating improved unlearning capability and high image quality for the remaining classes. In terms of efficiency, SPM introduces negligible unlearning time, whereas baseline methods require between 23.8 seconds (SFR-on) and over 18,711 seconds (SA). %

\myparagraph{Qualitative results.} \figref{fig:gene_cifar} shows qualitative results for SPM unlearning, where five CIFAR-10 classes are removed at a time. We visualize generated samples for both the unlearned and remaining classes. The samples corresponding to the forgotten classes (\eg, Classes 0 – 4 for Unlearn 0 – 4) display totally different objects, confirming that the model no longer generates images of the removed categories. Notably, the image quality is high for the remaining classes. 
These results demonstrate that the proposed SPM supports precise, class-level forgetting without harming the generation quality of the remaining categories.

\myparagraph{Ablation study of label permutation.}
We conduct an ablation on the proposed label permutation augmentation for SPM. In~\tabref{tab:ddpm_cifar_label_perm}, we report the unlearning results on CIFAR-10 for SPMs trained with and without label permutation. The model trained without label permutation tends to memorize label mappings within the model weights and fails to unlearn, as can be seen from the higher $\Delta$FID$_{\tt O}$, $\Delta$UA and $\Delta$FID$_{\tt R}$. These results demonstrate the efficacy of the proposed label permutation.

%% file: tabs/std_ddpm_cifar.tex
\begin{table}[t]
\centering
\caption{{\bf FID comparisons on CIFAR-10.} $\left|\gT\right|$ is the size of the inputted set. Runtime (RT) is for generating 1,000 images.}
\vspace{-0.2cm}
\resizebox{\linewidth}{!}{
\begin{tabular}{c|c|c|ccccc}
\specialrule{.15em}{.05em}{.05em}
 \multirow{2}{*}&\multirow{2}{*}{\textbf{DDPM}}& \multirow{2}{*}{\textbf{GMM}} & \multicolumn{5}{c}{\textbf{SPM}} 
 \\ & &&\textbf{$\left|\gT\right| = 10$} & \textbf{$\left|\gT\right| = 20$} & \textbf{$\left|\gT\right| = 50$} & \textbf{$\left|\gT\right| = 100$} & \textbf{$\left|\gT\right| = 1024$} \\
\hline
 FID&7.28 &198.42& 7.71 $_{\pm\text{0.20}}$ & 7.30 $_{\pm\text{0.09}}$ & 7.17 $_{\pm\text{0.18}}$ &
   7.09 $_{\pm\text{0.06}}$ & 7.04 $_{\pm\text{0.13}}$  \\
   RT (s) &42&0.42&52&63&101&173&1486 \\
   Rel. RT (\%) &+0\%&-99\%&+23\%&+50\%&+140\%&+311\%&+3404\%
\\
\specialrule{.15em}{.05em}{.05em}
\end{tabular}}

\label{tab:std_ddpm_cifar}
\vspace{-0.5cm}
\end{table}

%% file: tabs/cls_cifar.tex
\begin{table*}[t]
    \setlength{\tabcolsep}{3pt}
\centering
\caption{\textbf{CIFAR-10 class-wise unlearning on classification averaged over 10 classes.} SPMs achieve close performance to the retrained SPM with a fast unlearning time. The UA, RA, and TA values are reported in parentheses with the corresponding gaps.
}
\vspace{-0.2cm}
\resizebox{\textwidth}{!}{ %
\begin{tabular}{c|c|cccccc|cccccc}
\specialrule{.15em}{.05em}{.05em}
    \multirow{2}{*}{\textbf{Backbone}} & \multirow{2}{*}{\textbf{Method}} & \multicolumn{6}{c|}{\textbf{Unlearn 1 Class  (10$\%$)}} & \multicolumn{6}{c}{\textbf{Unlearn 5 Classes  (50$\%$)}} \\
 &  & \textbf{$\text{PG}_{\tt H}$ $\downarrow$} & \textbf{ $\text{PG}_{\tt S}$$\downarrow$} &\textbf{$\Delta$UA $\downarrow$} & \textbf{$\Delta$RA $\downarrow$} & \textbf{$\Delta$TA $\downarrow$} & {$\text{Time}_{\tt MU}$  (s) $\downarrow$}  & \textbf{$\text{PG}_{\tt H}$ $\downarrow$} & \textbf{ $\text{PG}_{\tt S}$$\downarrow$} & \textbf{$\Delta$UA $\downarrow$} & \textbf{$\Delta$RA $\downarrow$} & \textbf{$\Delta$TA $\downarrow$} & {$\text{Time}_{\tt MU}$  (s) $\downarrow$}  \\
\hline
\multirow{9}{*}{\shortstack{ResNet18}} & \cellcolor{retrain} Retrain & \cellcolor{retrain} 0.00 & \cellcolor{retrain} 0.00& \cellcolor{retrain} 0.00 (100.00) & \cellcolor{retrain} 0.00 (100.00) & \cellcolor{retrain} 0.00 (94.76) & \cellcolor{retrain} 2317.6 & \cellcolor{retrain} 0.00 & \cellcolor{retrain} 0.00 &\cellcolor{retrain} 0.00 (100.00) & \cellcolor{retrain} 0.00 (100.00) & \cellcolor{retrain} 0.00 (96.06) & \cellcolor{retrain} 1284.9 \\
 & GA~\cite{thudi2022unrolling}  & 18.48&0.99& 5.80 (94.20) & 5.64 (94.36) & 6.42 (88.34) & 8.90 & 32.62&2.86 & \textbf{0.00} (100.00) & 15.51 (84.49) & 14.23 (81.77) & 41.3 \\
 & FT~\cite{warnecke2021machine}  & 13.11&0.48& 66.25 (33.75) & 0.15 (99.85) & 0.32 (94.44) & 148.7 &26.21 &0.72 & 34.76 (65.24) & 0.06 (99.94) & 0.59 (96.65) & 81.8 \\
 & IU~\cite{koh2017understanding} & 55.93&3.51 & \textbf{0.00} (100.00) & 54.13 (45.87) & 50.70 (44.06) & 17.7 & 75.71&424.85 & \textbf{0.00} (100.00) & 79.97 (20.03) & 76.04 (20.02) & 16.8 \\
 & BE~\cite{chen2023boundary}  & 15.43&0.71& 23.01 (76.99) & 1.81 (98.19) & 2.68 (92.08) & 15.1 & 46.00& 2.11& 64.21 (35.79) & 0.40 (99.60) &\textbf{ 0.17} (95.89) & 81.6 \\
 & BS~\cite{chen2023boundary}  & 15.28&0.71& 23.10 (76.90) & 1.74 (98.26) & 2.59 (92.17) & 28.1  &44.01 &1.95& 62.25 (38.75) & 0.40 (99.60) & 0.04 (96.02) & 159.7 \\
 & $\ell_1$-sparse~\cite{jia2023model}  &13.03 &0.46& 60.26 (39.74) & 0.22 (99.78) & 0.55 (94.21) & 148.8  & 26.12&0.72& 34.94 (65.30) & 0.06 (99.94) & 0.59 (96.65) & 82.8 \\
 & SalUn~\cite{fan2024salun}  &9.76 &0.32& 0.90 (99.10) & 0.06 (99.94) & \textbf{0.14} (94.90) & 137.8  & 25.84&0.74& 1.98 (98.02) & 0.03 (99.97) & 0.82 (96.88) & 142.9 \\
 & MUNBa~\cite{wu2025munba} & 16.92& 0.50& 8.47 (91.53) & 0.22 (99.78) & 0.55 (94.21) & 369.1 & 29.38& 1.46& \textbf{0.00} (91.53) & 0.22 (99.78) & 1.85 (94.21) & 369.1 \\
\hline
\multirow{2}{*}{ResNet18-KNN} & \cellcolor{retrain}Retrain &\cellcolor{retrain} 0.00 & \cellcolor{retrain} 0.00& \cellcolor{retrain}0.00 (100.00) & \cellcolor{retrain}0.00 (100.00) &\cellcolor{retrain} 0.00 (93.22) &\cellcolor{retrain} 2317.6  & \cellcolor{retrain} 0.00 & \cellcolor{retrain} 0.00&\cellcolor{retrain} 0.00 (100.00) & 0.00 (99.59) \cellcolor{retrain}& 0.00 (96.07) \cellcolor{retrain}&\cellcolor{retrain}1284.9\\
&Deletion &\textbf{8.10}& 0.55 & \textbf{0.00} (100.00) & 0.44 (99.56) & 1.77 (94.99) & \textbf{$<$1} & \textbf{17.81}&0.67 & \textbf{0.00} (100.00) & 0.13 (99.72) & 0.88 (96.95) &\textbf{$<$1}\\
\hline
ResNet18-SPM 
& \cellcolor{retrain} Retrain & \cellcolor{retrain} 0.00 & \cellcolor{retrain} 0.00 &\cellcolor{retrain} 0.00 (100.00) & \cellcolor{retrain} 0.00 (99.98) & \cellcolor{retrain} 0.00 (94.38) & \cellcolor{retrain} 3288.9 &\cellcolor{retrain} 0.00 & \cellcolor{retrain} 0.00 & \cellcolor{retrain} 0.00 (100.00) & \cellcolor{retrain} 0.00 (99.99) & \cellcolor{retrain} 0.00 (95.73) & \cellcolor{retrain} 1841.6 \\
(Ours)& Deletion &8.62& \textbf{0.27 }& \textbf{0.00} (100.00) & \textbf{0.00} (99.98) & 0.52 (94.90) & \textbf{$<$1} & 21.20&\textbf{0.52} & \textbf{0.00} (100.00) & \textbf{0.01 }(99.98) & 1.23 (96.96) &\textbf{$<$1} \\
\specialrule{.15em}{.05em}{.05em}
\end{tabular}
}

\label{tab:cls_cifar}
\vspace{-0.5cm}

\end{table*}

%% file: tabs/cls_imagenet.tex
\begin{table*}[t]
\small
\centering
\caption{\textbf{ImageNet 1-class unlearning on classification averaged over 10 classes.} We observe that SPMs achieve close performance to the retrained model with a low unlearning time. %
}
\vspace{-0.2cm}
\resizebox{0.8\textwidth}{!}{ %
\begin{tabular}{c|c|cccccc}
\specialrule{.15em}{.05em}{.05em}
{\textbf{Backbone}} & {\textbf{Method}} 
&  \textbf{$\text{PG}_{\tt H}$ $\downarrow$} & \textbf{ $\text{PG}_{\tt S}$$\downarrow$} & \textbf{$\Delta$UA $\downarrow$} & \textbf{$\Delta$RA $\downarrow$} & \textbf{$\Delta$TA $\downarrow$} & $\text{Time}_{\tt MU}$   (s) $\downarrow$ \\
\hline
\multirow{10}{*}{\shortstack{ResNet18 }} 
& \cellcolor{retrain} Retrain & \cellcolor{retrain} 0.00 & \cellcolor{retrain} 0.00&\cellcolor{retrain}0.00 (100.00) & \cellcolor{retrain}0.00 (68.19) & \cellcolor{retrain}0.00 (68.13) & \cellcolor{retrain}226025.4  \\
& GA~\cite{thudi2022unrolling} & 30.47&0.78& 7.31 (92.69) & 3.41 (64.78) & 2.72 (65.41) & 24.8  \\
& FT~\cite{warnecke2021machine} & 29.68&0.53 & 0.01 (99.99) & 5.68 (62.51) & 3.84 (64.29) & 11361.4  \\
& IU~\cite{koh2017understanding} &39.18 &1.10 & 4.15 (95.85) & 13.71 (54.48) & 10.85 (57.28) & 5684.0  \\
& BE~\cite{chen2023boundary} &33.94& 0.92 & 3.96 (96.04) & 7.17 (61.02) & 5.79 (62.34) & 47.0  \\
& BS~\cite{chen2023boundary} &35.45 &0.99 & 1.99 (98.01) & 8.84 (59.35) & 7.18 (60.95) & 44.5  \\
& $\ell_1$-sparse~\cite{jia2023model} &31.29&0.57 & \textbf{0.00} (100.00) & 7.99 (60.19) & 5.53 (62.60) & 11236.7  \\
& SalUn~\cite{fan2024salun} &25.72 &0.42 & 22.43 (77.57) &\textbf{ 0.19 }(68.00) & 0.39 (68.52) & 225519.2  \\
& MUNBa~\cite{wu2025munba} &34.05 &0.65 & \textbf{0.00} (100.00) & 11.64 (56.55) & 8.45 (59.68) & 16685.9  \\
\hline

\multirow{2}{*}{ResNet18-KNN} & \cellcolor{retrain}Retrain &\cellcolor{retrain} 0.00 & \cellcolor{retrain} 0.00& \cellcolor{retrain}0.00 (100.00) & \cellcolor{retrain}0.00 (65.94) & \cellcolor{retrain}0.00 (66.15) &\cellcolor{retrain}226025.4 \\
& Deletion & 18.99&2.54 & \textbf{0.00} (100.00) & 2.72 (63.22) & 1.71 (64.44) & \textbf{$<$1}
\\
\hline
\shortstack{ResNet18-SPM}
& \cellcolor{retrain} Retrain& \cellcolor{retrain} 0.00 & \cellcolor{retrain} 0.00& \cellcolor{retrain}0.00 (100.00) & \cellcolor{retrain}0.00 (65.20) & \cellcolor{retrain}0.00 (67.10) & \cellcolor{retrain} 268803.5\\
(Ours)& Deletion & \textbf{14.60}&\textbf{0.12} & \textbf{0.00} (100.00) & \textbf{0.19} (65.01) & \textbf{0.00} (67.10) & \textbf{$<$1}  \\
\specialrule{.15em}{.05em}{.05em}
\end{tabular}
}

\label{tab:cls_imagenet}
\vspace{-0.25cm}
\end{table*}

%% file: src/unlearn_metric.tex
\input{tabs/ddpm_cifar}
{\noindent\bf Existing metrics.} Prior work~\cite{fan2024salun,jia2023model} evaluates an MU algorithm in two dimensions, its efficiency, and unlearning quality. For efficiency, the wall time for each MU algorithm ($\text{Time}_{\tt MU}$) is reported. For quality, prior works report the accuracy gap
\bea\label{eq:acc_gap}
\Delta \text{Acc}(\gD) \triangleq |\text{Acc}(\tilde{F},\gD) - \text{Acc}(F',\gD)| 
\eea
between the unlearned model $\tilde{F}$ and the oracle $F'$ (\equref{eq:unlearn}) over a dataset $\gD$. Prior works considered~\equref{eq:acc_gap} over three dataset splits, leading to the following metrics (lower the better):
\begin{itemize}
    \item $\Delta$UA evaluates the {\bf U}nlearned set, \ie, $\gD \triangleq \gU$.
    \item $\Delta$RA evaluates the {\bf R}emaining set, \ie, $\gD \triangleq \gT \setminus \gU$.
    \item $\Delta$TA evaluates the {\bf T}est set $\gV$.
\end{itemize}

\input{figs/gene_cifar}
\input{tabs/ddpm_cifar_label_perm}

\myparagraph{Rethinking unlearning metrics.} While the accuracy gap in~\equref{eq:acc_gap} is one way to measure the difference between the unlearned and oracle model, a small accuracy gap does not always mean the unlearned/oracle models are similar in all their \textit{predictions}. The models can make different mistakes on a dataset yet achieve the same accuracy.

To address this, we propose {\tt H}ard/{\tt S}oft \textit{{\bf P}rediction {\bf G}aps} for evaluation, which directly measure the difference in prediction on the test-set $\gV$ defined as
    \bea\label{eq:oracle-agree}
    \text{PG}_{\tt H}(\tilde{F},F')
    = \frac{1}{|\gV|}\sum_{\vx\in\gV}
    \1 [\tilde{\vy}(\vx) \neq \vy'(\vx)],
    \eea
where  
$\tilde{\vy}=\argmax_y \tilde{F}(\vx)[y]$ is the hard prediction from the unlearned model $\tilde{F}$. Similarly, $\vy'$ is the prediction from the oracle. Here, $\mathbf{1}[\cdot]$ denotes the indicator function.

For soft-predictions where $\tilde{F}$ and $F'$ output the class probabilities, we measure the difference using KL divergence, \ie,
\bea
\label{eq:oracle-kl}
 \text{PG}_{\tt S}(\tilde{F},F')
= \tfrac{1}{|\gV|}\sum_{\vx\in\gV}
\KL\!\big(\tilde{F}(\vx), F'(\vx) \big).
\eea

We note that the proposed prediction gap is stricter than the accuracy gap in~\equref{eq:acc_gap}, \ie, an upper bound. 
\begin{restatable}[]{claim}{claimone}
\label{thm:metric}
Given that the oracle margin $\gamma(\vx)=F'(\vx)[y_1]-F'(\vx)[y_2]\ge \gamma_{\min}>0$ for all $\vx\in\gV$, where $y_1,y_2$ denote the indices of the largest and second predicted probabilities from the oracle model.
We have
\bea\label{eq:metric-bound}
\Delta \text{Acc}({\gV}) \leq \text{PG}_{\tt H}(\tilde{F}, F')
\;\le\; \frac{\sqrt{2}}{\gamma_{\min}}\sqrt{\text{PG}_{\tt S}(\tilde{F},F')}. 
\eea
\end{restatable}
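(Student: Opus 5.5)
The proof splits into two inequalities, and I will handle each separately.

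\emph{First inequality.} Write $\vy^\star(\vx)$ for the ground-truth label of $\vx\in\gV$. Then
\[
\text{Acc}(\tilde F,\gV)-\text{Acc}(F',\gV)=\tfrac{1}{|\gV|}\sum_{\vx}\big(\1[\tilde\vy(\vx)=\vy^\star(\vx)]-\1[\vy'(\vx)=\vy^\star(\vx)]\big).
\]
Each summand is nonzero only if $\tilde\vy(\vx)\neq\vy'(\vx)$, since when the predictions agree both indicators coincide. Each summand also has absolute value at most $1$. Applying the triangle inequality gives $\Delta\text{Acc}(\gV)\le \frac{1}{|\gV|}\sum_{\vx}\1[\tilde\vy(\vx)\neq\vy'(\vx)]=\text{PG}_{\tt H}$.

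\emph{Second inequality.} I will first prove a pointwise bound and then average. Fix $\vx$ with $\tilde\vy(\vx)\neq\vy'(\vx)=y_1$, and write $p=\tilde F(\vx)$, $q=F'(\vx)$. Let $k=\tilde\vy(\vx)\neq y_1$. Then $p_k\ge p_{y_1}$, while $q_{y_1}-q_k\ge q_{y_1}-q_{y_2}=\gamma(\vx)\ge\gamma_{\min}$. Hence
\[
\gamma_{\min}\le (q_{y_1}-q_k)-(p_{y_1}-p_k)\le |q_{y_1}-p_{y_1}|+|q_k-p_k|\le \|p-q\|_1 .
\]
By Pinsker's inequality, $\|p-q\|_1\le\sqrt{2\KL(p,q)}$. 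Therefore on every disagreement point
\[
\1[\tilde\vy\neq\vy']\le \frac{\|p-q\|_1}{\gamma_{\min}}\le\frac{\sqrt{2\,\KL(\tilde F(\vx),F'(\vx))}}{\gamma_{\min}},
\]
and this bound holds trivially on agreement points.

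Averaging over $\gV$ and applying Jensen's inequality (concavity of $\sqrt{\cdot}$) gives
\[
\frac{1}{|\gV|}\sum_{\vx}\sqrt{\KL}\le\sqrt{\frac{1}{|\gV|}\sum_{\vx}\KL}=\sqrt{\text{PG}_{\tt S}},
\]
which yields $\text{PG}_{\tt H}\le\frac{\sqrt2}{\gamma_{\min}}\sqrt{\text{PG}_{\tt S}}$.

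\emph{Main obstacle.} The delicate step is the pointwise margin argument. It uses the $\ell_1$ distance, which is exactly where the constant $\sqrt2$ from Pinsker's inequality appears; routing the argument through total variation, i.e.\ half the $\ell_1$ norm, would give a different constant. The step also requires that any flip of the argmax costs at least $\gamma_{\min}$ in $\ell_1$ distance, which holds because $q_k\le q_{y_2}$ for every $k\neq y_1$.

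The argument also depends on argmax ties being broken consistently. If $p$ has ties, I take $\tilde\vy$ to be any maximizer; the inequality $p_k\ge p_{y_1}$ still holds, so the bound is unaffected.
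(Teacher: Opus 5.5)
Your proof is correct and follows the paper's argument essentially step for step. The first inequality comes from the pointwise indicator bound. The second comes from the margin argument $\gamma_{\min}\le |q_{y_1}-p_{y_1}|+|q_k-p_k|\le\|p-q\|_1$, followed by Pinsker and Jensen.

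One minor aside: your remark that routing through total variation would change the constant is not quite right. Since $(q_{y_1}-p_{y_1})+(p_k-q_k)\le 2\,\mathrm{TV}(p,q)$ and $\mathrm{TV}(p,q)\le\sqrt{\KL/2}$, you get the same $\sqrt{2}$.
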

\begin{proof} The proof is based on the triangle inequality of total variation; We defer it to Appendix~\secref{sec:supp_proof}.
\end{proof}

%% file: tabs/ddpm_cifar.tex
\begin{table*}[t]

\centering
\caption{\textbf{CIFAR-10 class-wise forgetting on generation.} We experiment on 5 unlearning scenarios with unlearning classes: \textit{Automobile}, \textit{Cat}, \textit{Dog}, \textit{Horse}, and \textit{Truck}. We observe that our method achieves close performance to the retrained model with a low unlearning time.}
\vspace{-0.2cm}
\renewcommand{\arraystretch}{1.2}
\setlength{\tabcolsep}{2pt}
\small
\resizebox{\textwidth}{!}{
\begin{tabular}{c|c|ccc|ccc|ccc|ccc|ccc|c}
\specialrule{.15em}{.05em}{.05em}
\multirow{2}{*}{\textbf{Backbone}} & \multirow{2}{*}{\textbf{Method}} 
& \multicolumn{3}{c|}{\textbf{Automobile}} 
& \multicolumn{3}{c|}{\textbf{Cat}} 
& \multicolumn{3}{c|}{\textbf{Dog}} 
& \multicolumn{3}{c|}{\textbf{Horse}} 
& \multicolumn{3}{c|}{\textbf{Truck}} 
& \multirow{2}{*}{{$\text{Time}_{\tt MU}$   (s) $\downarrow$}} \\
& 
& \textbf{FID$_{\tt O}$ $\downarrow$} & \textbf{$\Delta$UA $\downarrow$} & \textbf{$\Delta$FID$_{\tt R}$ $\downarrow$} 
& \textbf{FID$_{\tt O}$ $\downarrow$} & \textbf{$\Delta$UA $\downarrow$} & \textbf{$\Delta$FID$_{\tt R}$ $\downarrow$} 
& \textbf{FID$_{\tt O}$ $\downarrow$} & \textbf{$\Delta$UA $\downarrow$} & \textbf{$\Delta$FID$_{\tt R}$ $\downarrow$} 
& \textbf{FID$_{\tt O}$ $\downarrow$} & \textbf{$\Delta$UA $\downarrow$} & \textbf{$\Delta$FID$_{\tt R}$ $\downarrow$} 
& \textbf{FID$_{\tt O}$ $\downarrow$} & \textbf{$\Delta$UA $\downarrow$} & \textbf{$\Delta$FID$_{\tt R}$ $\downarrow$} \\
\hline

\multirow{4}{*}{\shortstack{DDPM}} & \cellcolor{retrain} Retrain &  \cellcolor{retrain}0.00 &  \cellcolor{retrain} 0.00 (100.00) & 0 \cellcolor{retrain}.00 (11.21) &  \cellcolor{retrain}0.00 &  \cellcolor{retrain}0.00 (99.98) & \cellcolor{retrain} 0.00 (10.84) &  \cellcolor{retrain}0.00&  \cellcolor{retrain}0.00 (100.00) &  \cellcolor{retrain}0.00 (10.88) &  \cellcolor{retrain}0.00 &  \cellcolor{retrain}0.00 (99.98) &  \cellcolor{retrain}0.00 (10.00) &  \cellcolor{retrain}0.00 &  \cellcolor{retrain}0.00 (99.90) &  \cellcolor{retrain}0.00 (10.15) &  \cellcolor{retrain}145601.4 \\ 
& SA~\cite{heng2023selective} & 31.00 & \textbf{0.00 }(100.00) & 12.35 (23.56) & 32.54 & 14.18 (85.80) & 10.50 (21.34) & 32.03 & 8.60 (91.40) & 11.11 (21.19) & 477.61 & \textbf{0.02} (100.00) & 11.13 (21.13) & 35.79 & 0.10 (100.00) & 18.89 (29.04) & 18711.0 \\
& SalUn~\cite{fan2024salun} & 6.30 & 0.20 (99.80) & 10.02 (21.23) & 6.88 & 1.38 (98.60) & 9.45 (20.29) & 6.93 & \textbf{0.00} (100.00) & 9.30 (20.18) & 6.11 & 0.58 (99.40) & 10.70 (20.70) & 3.59 & 0.70 (99.20) & 10.30 (20.45) & 493.2 \\
& SFR-on~\cite{huang2024unified} & 25.69& \textbf{0.00 }(100.00) & 9.49 (20.70) & 41.68 & 7.38 (92.60) & 7.60 (18.44) & 41.30 & 0.20 (99.80) & 8.01 (18.89) & 91.48& \textbf{0.02} (100.00) & 9.93 (19.93) & 42.70 & 0.10 (100.00) & 10.46 (20.61) & 23.8 \\
\hline
\multirow{2}{*}{\shortstack{SPM}} & \cellcolor{retrain} Retrain &  \cellcolor{retrain}0.00&  \cellcolor{retrain}0.00 (99.76) & \cellcolor{retrain} 0.00 (7.40) & \cellcolor{retrain} 0.00 &  \cellcolor{retrain}0.00 (99.26) &  \cellcolor{retrain}0.00 (7.15) & \cellcolor{retrain} 0.00 & \cellcolor{retrain} 0.00 (99.78)& \cellcolor{retrain} 0.00 (6.52) &  \cellcolor{retrain}0.00 &  \cellcolor{retrain}0.00 (99.84) &  \cellcolor{retrain}0.00 (6.89) & \cellcolor{retrain} 0.00 &  \cellcolor{retrain}0.00 (99.64)& \cellcolor{retrain} 0.00 (7.28) & \cellcolor{retrain} 90935.9 \\
& Ours & \textbf{1.97}& 0.10 (99.86) & \textbf{0.38} (7.78) &\textbf{ 1.85} & \textbf{0.04} (99.22) & \textbf{0.50 }(7.65) & \textbf{1.80 }& 0.04 (99.82) & \textbf{0.08} (6.60) & \textbf{1.90 }& 0.08 (99.76) & \textbf{0.29} (7.18) & \textbf{1.97} & \textbf{0.06 }(99.58) & \textbf{0.40} (7.68) &\textbf{$<$1} \\
\specialrule{.15em}{.05em}{.05em}
\end{tabular}
}
\label{tab:ddpm_cifar}
\vspace{-0.45cm}
\end{table*}

%% file: figs/gene_cifar.tex
\begin{figure*}[t]
\small
\centering
\renewcommand{\arraystretch}{1.2}
\setlength{\tabcolsep}{2pt}
\begin{tabular}{>{\centering\arraybackslash}m{1.8cm}|*{5}{>{\centering\arraybackslash}m{0.07\textwidth}}|*{5}{>{\centering\arraybackslash}m{0.07\textwidth}}}
\specialrule{.15em}{.05em}{.05em}
  & \multicolumn{5}{c}{\textbf{Class 0 - 4}} & \multicolumn{5}{c}{\textbf{Class 5 - 9}} \\
& planes & cars & birds & cats & deer & dogs & frogs & horses & ships & trucks \\
\midrule
\makecell{Pre-trained\\SPM}
& \includegraphics[width=0.05\textwidth]{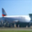} &
  \includegraphics[width=0.05\textwidth]{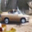} &
  \includegraphics[width=0.05\textwidth]{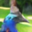} &
    \begin{overpic}[width=0.05\textwidth]{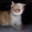} 
\linethickness{0.8pt}
    \put(0,0){\color{blue}\framebox(100,100){}}
\end{overpic} &
  \includegraphics[width=0.05\textwidth]{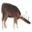} &
  \includegraphics[width=0.05\textwidth]{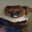} &
    \begin{overpic}[width=0.05\textwidth]{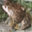} 
\linethickness{0.8pt}
    \put(0,0){\color{blue}\framebox(100,100){}}
\end{overpic} &
  \includegraphics[width=0.05\textwidth]{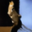} &
  \includegraphics[width=0.05\textwidth]{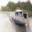} &
  \includegraphics[width=0.05\textwidth]{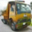} \\
\midrule
\makecell{Unlearn \\ 0 - 4} &
  \includegraphics[width=0.05\textwidth]{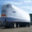} &
  \includegraphics[width=0.05\textwidth]{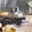} &
  \includegraphics[width=0.05\textwidth]{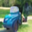} &
\begin{overpic}[width=0.05\textwidth]{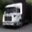}
    \linethickness{0.8pt}
    \put(0,0){\color{red}\framebox(100,100){}}
\end{overpic} &
  \includegraphics[width=0.05\textwidth]{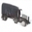} &
  \includegraphics[width=0.05\textwidth]{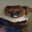} &
  \begin{overpic}[width=0.05\textwidth]{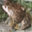} 
\linethickness{0.8pt}
    \put(0,0){\color{blue}\framebox(100,100){}}
\end{overpic} &
  \includegraphics[width=0.05\textwidth]{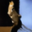} &
  \includegraphics[width=0.05\textwidth]{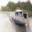} &
  \includegraphics[width=0.05\textwidth]{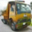} \\
\midrule
\makecell{Unlearn \\ 5 - 9} &
  \includegraphics[width=0.05\textwidth]{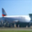} &
  \includegraphics[width=0.05\textwidth]{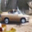} &
  \includegraphics[width=0.05\textwidth]{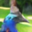} &
      \begin{overpic}[width=0.05\textwidth]{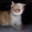} 
\linethickness{0.8pt}
    \put(0,0){\color{blue}\framebox(100,100){}}
\end{overpic} &\includegraphics[width=0.05\textwidth]{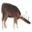} &
  \includegraphics[width=0.05\textwidth]{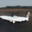} &
  \begin{overpic}[width=0.05\textwidth]{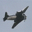}
    \linethickness{0.8pt}
    \put(0,0){\color{red}\framebox(100,100){}}
\end{overpic} &
  \includegraphics[width=0.05\textwidth]{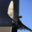} &
  \includegraphics[width=0.05\textwidth]{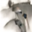} &
  \includegraphics[width=0.05\textwidth]{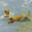} \\
\specialrule{.15em}{.05em}{.05em}
\end{tabular}
\vspace{-0.2cm}
\caption{{\bf Qualitative results for unlearned SPM}. %
When a class is unlearned (\eg, cats or frogs framed in \textcolor{red}{red}), the model avoids generating that concept and produces samples resembling remaining classes (\eg, a truck replacing a cat or a plane replacing a frog). Additionally, the generations from the remaining classes are left unchanged (framed in \textcolor{blue}{blue}).}
\label{fig:gene_cifar}
\vspace{-.17cm}
\end{figure*}

%% file: tabs/ddpm_cifar_label_perm.tex
\begin{table*}[t]
\centering
\caption{\textbf{Ablation study of label permutation.} We experiment on CIFAR-10 class-wise forgetting on generation with SPMs, and the results show that removing label permutation leads to a large gap in UA.}
\vspace{-0.2cm}
\renewcommand{\arraystretch}{1.2}
\setlength{\tabcolsep}{4pt}
\resizebox{\textwidth}{!}{
\begin{tabular}{c|ccc|ccc|ccc|ccc|ccc}
\specialrule{.15em}{.05em}{.05em}
\multirow{2}{*}{\textbf{Method}} 
& \multicolumn{3}{c|}{\textbf{Automobile}} 
& \multicolumn{3}{c|}{\textbf{Cat}} 
& \multicolumn{3}{c|}{\textbf{Dog}} 
& \multicolumn{3}{c|}{\textbf{Horse}} 
& \multicolumn{3}{c}{\textbf{Truck}} 
\\
&  \textbf{FID$_{\tt O}$ $\downarrow$} &\textbf{$\Delta$UA $\downarrow$}   & \textbf{$\Delta \text{FID}_{\tt R}$ $\downarrow$} 
&\textbf{FID$_{\tt O}$ $\downarrow$} &  \textbf{$\Delta$UA $\downarrow$}   & \textbf{$\Delta \text{FID}_{\tt R}$ $\downarrow$} 
&\textbf{FID$_{\tt O}$ $\downarrow$} &  \textbf{$\Delta$UA $\downarrow$}   & \textbf{$\Delta \text{FID}_{\tt R}$ $\downarrow$} 
&\textbf{FID$_{\tt O}$ $\downarrow$} &  \textbf{$\Delta$UA $\downarrow$}   & \textbf{$\Delta \text{FID}_{\tt R}$ $\downarrow$} 
&\textbf{FID$_{\tt O}$ $\downarrow$} &  \textbf{$\Delta$UA $\downarrow$}   & \textbf{$\Delta \text{FID}_{\tt R}$ $\downarrow$}
\\
\hline

\cellcolor{retrain} Retrain 
&\cellcolor{retrain} 0.00  &\cellcolor{retrain} 0.00 (99.76) & \cellcolor{retrain} 0.00 (7.40)
&\cellcolor{retrain} 0.00 & \cellcolor{retrain} 0.00 (99.26) & \cellcolor{retrain} 0.00 (7.15)
&\cellcolor{retrain} 0.00 & \cellcolor{retrain} 0.00 (99.78)& \cellcolor{retrain} 0.00 (6.52)
&\cellcolor{retrain} 0.00& \cellcolor{retrain} 0.00 (99.84) & \cellcolor{retrain} 0.00 (6.89)
&\cellcolor{retrain} 0.00& \cellcolor{retrain} 0.00 (99.64)& \cellcolor{retrain} 0.00 (7.28) 
\\

w/o perm. 
& 2.88 & 0.82 (98.94) & 2.66 (4.74)
&5.08& 3.13 (96.13) & 2.68 (4.47)
&3.39& 83.61 (16.17) & 2.15 (4.37)
&3.87& 60.89 (38.95) & 2.22 (4.67)
&2.90 & 1.56 (98.08) & 2.58 (4.70)
\\

Ours
& \textbf{1.97}& \textbf{0.10 }(99.86) &\textbf{ 0.38 }(7.78)
&\textbf{ 1.85} & \textbf{0.04} (99.22) &\textbf{0.50 }(7.65)
& \textbf{1.80 }& \textbf{0.04} (99.82) & \textbf{0.08} (6.60)
& \textbf{1.90 }& \textbf{0.08} (99.76) &\textbf{ 0.29} (7.18)
& \textbf{1.97}& \textbf{ 0.06 }(99.58) & \textbf{0.40} (7.68)
\\
\specialrule{.15em}{.05em}{.05em}
\end{tabular}
}

\label{tab:ddpm_cifar_label_perm}
\vspace{-0.4cm}
\end{table*}

%% file: src/conc.tex
\section{Conclusion} \label{sec:conc}
We introduced DTF, a family of SPMs for unlearning. The proposed SPMs enable unlearning through test-time deletion of samples from an inputted set, without modifying trained parameters. Empirical results on image classification and generation demonstrate that SPMs achieve competitive performance compared to existing parametric models, while enabling fast and faithful unlearning that closely matches a fully retrained oracle. We hope this work inspires future research on machine unlearning beyond post-hoc methods and encourages model designs suitable for unlearning. We believe SPMs are a viable path towards privacy-compliant AI systems where support for rapid data removal requests is required. 

\vspace{3pt}
{\small
{\noindent\bf Acknowledgements: } This project is supported in part by an NSF Award \#2420724, and a Google Research Scholar Award.
}

%% file: src/supp.tex
\onecolumn
\appendix
\label{sec:appendix}

{\noindent \bf \Large Appendix}

\vspace{6pt}
{\bf\noindent The appendix is organized as follows:}
\begin{itemize}
    \item Implementation and experimental details are presented in~\secref{sec:supp_implemet_detail}.
      \item The complete proof of Claim~\ref{thm:metric} is provided in~\secref{sec:supp_proof}.
    \item Additional qualitative results for GMM generation are included in~\secref{sec:supp_gmm}.
    \item Additional results on random unlearning for classification are reported in~\secref{sec:random_unlearn}.
\end{itemize}

\section{Implementation and Experiment Details}
\label{sec:supp_implemet_detail}

\myparagraph{Details for task performance experiments.}
For SPM on classification, ResNet18 is trained using SGD with a learning rate of 0.1 and a batch size of 256. We train from scratch for 100 epochs on CIFAR-10 and fine-tune the ImageNet-pretrained model for 160 epochs on ImageNet. 
For each mini-batch, we pair each query with 1,024 randomly sampled instances from the training set, which serve as the inputted set during training. All experiments are repeated with five random seeds.

As for SPM on generation, we train a DDPM for 2,000 epochs with a learning rate of 0.0005 and a batch size of 256.
During training, each batch is paired with an inputted set of 256 samples drawn from the training set, serving as the non-parametric memory for SPM.
During evaluation, we generate 5,000 samples per class using DDIM~\cite{ddim} with 50 sampling steps.
Runtime is measured based on the time required to generate 1,000 images. 
All the experiments are conducted on Nvidia L40S.

\myparagraph{Baselines for task performance.}
For classification, we adopt ResNet18 and ResNet18-KNN as the parametric and non-parametric baselines, respectively. We train ResNet18 by following the implementation details provided by~\citet{fan2024salun}.  ResNet18-KNN performs $K$-nearest neighbor classification on the embeddings extracted from ResNet18, with the optimal value of $K$ determined to be 50 for achieving the highest accuracy. In both ResNet18-KNN and SPM-R, the samples from the inputted set with the minimum L2 distance to the query are retrieved, with a retrieval size set of 256. This retrieval process is implemented using the FAISS (Facebook AI Similarity Search) library \cite{douze2025faiss}, which enables efficient similarity search through product quantization (PQ) codes, allowing for comparison of quantized representations and significantly reducing retrieval latency.

For generation, we adopt DDPM and GMM as the parametric and non-parametric baselines, respectively. We train DDPM following the implementation provided by~\citet{fan2024salun}. For GMM, we model the target distribution $p_c(x)$ for each class using 1,000 Gaussian components, with the mean $\mu_i$ and diagonal covariance $\Sigma_i$ of each component estimated using the Expectation-Maximization (EM) algorithm. The resulting model can be expressed as:

\begin{equation}
p_c(x) = \frac{1}{n} \sum_{i=1}^{n} \mathcal{N}(x | \mu_i, \Sigma_i),
\end{equation}
where \( \mathcal{N}(x | \mu_i, \Sigma_i) \) represents the Gaussian distribution with mean \( \mu_i \) and covariance \( \Sigma_i \) for the \( i \)-th component, and \( n \) is the total number of Gaussian components.

\myparagraph{Details for unlearning performance experiments.}
For SPM in classification, we construct the inputted set using the remaining classes, \ie, $\gT \setminus \gU$. Then, we apply the clustering strategy to aggregate the embeddings of samples from each class into an average.

For SPM on generation, if the class is unlearned, the inputted set is composed of images from another remaining class. For example, if the model is supposed to unlearn cat, we can use the images of planes to replace images of cats as the inputted set. Otherwise, the inputted set is sampled from the target class.

\myparagraph{Baselines for unlearning performance.}
 We utilize the code released from~\citet{fan2024salun} and~\citet{wu2025munba} for classification, and from~\citet{huang2024unified} for generation, adhering to their instructions to obtain the baseline results.

\myparagraph{Comparison with EraseDiff} \label{sec:erase_diff}
We utilize the code provided by EraseDiff~\cite{wu2025erasing} and present our results in \tabref{tab:supp_EraseDiff}. However, we were unable to fully replicate the results reported in their original paper. Furthermore, it is worth noting that their study presents results for only a single class in the class-wise unlearning experiment. This limited evaluation makes it challenging to assess the general effectiveness of the approach across a broader range of classes.
\input{tabs/supp_EraseDiff}

\myparagraph{More qualitative results} 
\figref{fig:gene_cifar_more} shows more qualitative results for SPM unlearning, where five CIFAR-10 classes are removed simultaneously. We visualize generated samples for both the unlearned and remaining classes. These results demonstrate that the proposed SPM supports precise, class-level forgetting without harming the generation quality of the remaining categories.
\input{figs/gene_cifar_more_sample}

\section{Proof of Claim~\ref{thm:metric}}
\label{sec:supp_proof}

\claimone*

\begin{proof}

\noindent\textbf{Proof of the first inequality in~\equref{eq:metric-bound}.}
For every $\vx\in\gV$, let $\vy^\star(\vx)$ be the ground-truth label. Then
\bea
\big|\1[\tilde{\vy}(\vx)=\vy^\star(\vx)]-\1[\vy'(\vx)=\vy^\star(\vx)]\big|
\;\le\; \1[\tilde{\vy}(\vx)\neq \vy'(\vx)].
\eea
Averaging over $\vx\in\gV$ yields
\bea
\Delta \Acc(\gV)\;\le\; \text{PG}_{\tt H}(\tilde{F},F').
\eea

\noindent\textbf{Proof of the second inequality in \equref{eq:metric-bound}.}
Fix $\vx\in\gV$ and recall the oracle margin
$\gamma(\vx)=F'(\vx)[y_1]-F'(\vx)[y_2]\ge \gamma_{\min}>0$, where $y_1,y_2$ are the indices of the largest and second largest coordinates of $F'(\vx)$. If $\tilde{\vy}(\vx)=\vy'(\vx)=y_1$, then $\1[\tilde{\vy}(\vx)\neq \vy'(\vx)]=0$. Otherwise, set $\tilde{\vy}=\tilde{\vy}(\vx)\neq y_1$. Since $F'(\vx)[\tilde{\vy}]\le F'(\vx)[y_2]$, we have
\bea
\gamma(\vx)\;\le\; F'(\vx)[y_1]-F'(\vx)[\tilde{\vy}].
\eea
Adding and subtracting $\tilde{F}(\vx)[y_1]$ and $\tilde{F}(\vx)[\tilde{\vy}]$, and using $\tilde{F}(\vx)[y_1]\le \tilde{F}(\vx)[\tilde{\vy}]$, we obtain
\bea
\gamma(\vx)
&\le& \big|F'(\vx)[y_1]-\tilde{F}(\vx)[y_1]\big|
     + \big|F'(\vx)[\tilde{\vy}]-\tilde{F}(\vx)[\tilde{\vy}]\big| \\
&\le& \big\|\tilde{F}(\vx)-F'(\vx)\big\|_1.
\eea
Hence,
\bea
\1[\tilde{\vy}(\vx)\neq \vy'(\vx)]
\;\le\; \frac{\big\|\tilde{F}(\vx)-F'(\vx)\big\|_1}{\gamma_{\min}}.
\eea
By Pinsker’s inequality,
\bea
\big\|\tilde{F}(\vx)-F'(\vx)\big\|_1 \;\le\; \sqrt{2\,\KL\!\big(\tilde{F}(\vx),F'(\vx)\big)}.
\eea
Therefore,
\bea
\1[\tilde{\vy}(\vx)\neq \vy'(\vx)]
\;\le\; \frac{\sqrt{2}}{\gamma_{\min}}\,
\sqrt{\KL\!\big(\tilde{F}(\vx),F'(\vx)\big)}.
\eea
Averaging over $\vx\in\gV$ and applying Jensen’s inequality to $t\mapsto \sqrt{t}$ yields
\bea
\text{PG}_{\tt H}(\tilde{F},F')
\;\le\; \frac{\sqrt{2}}{\gamma_{\min}}\,
\sqrt{\frac{1}{|\gV|}\sum_{\vx\in\gV}\KL\!\big(\tilde{F}(\vx),F'(\vx)\big)}
\;=\; \frac{\sqrt{2}}{\gamma_{\min}}\sqrt{\text{PG}_{\tt S}(\tilde{F},F')}.
\eea
Combining the two parts gives
\bea
\Delta \Acc(\gV)\;\le\; \text{PG}_{\tt H}(\tilde{F},F') \;\le\; \frac{\sqrt{2}}{\gamma_{\min}}\sqrt{\text{PG}_{\tt S}(\tilde{F},F')}.
\eea
\end{proof}

\section{Qualitative Results from GMM}\label{sec:supp_gmm}
\figref{fig:gmm_cifar} presents the qualitative results of GMM on CIFAR-10. Since GMM generates images by superimposing multiple Gaussian components, the generated images exhibit significant noise and blurriness. This highlights the limitations of traditional semi/non-parametric models in handling complex generative tasks, underscoring the necessity of integrating parametric models to effectively address such challenges.
\input{figs/gmm_visual}

\input{tabs/cls_cifar_random}
\section{Random Unlearning Results on Classification} \label{sec:random_unlearn}
In \tabref{tab:cls_cifar_random}, we present the results of random unlearning on CIFAR-10 classification.
Our proposed method achieves low $\text{PG}_{\tt H}$ and $\text{PG}_{\tt S}$ in both the 10\% and 50\% random unlearning scenarios. Moreover, the gaps between our unlearned model and the retrained model in terms of $\Delta$UA, $\Delta$RA, and $\Delta$TA are minimal, demonstrating the effectiveness of our approach.
It is worth noting that evaluating this task solely based on UA is not meaningful, since the unlearned data in random unlearning are essentially unseen samples to the model. Consequently, the model is expected to generalize to such data. Therefore, $\text{PG}_{\tt H}$ and $\text{PG}_{\tt S}$ serve as more appropriate evaluation metrics in this context, as they measure how closely the unlearned model approximates the logits of the retrained model.

%% file: tabs/supp_EraseDiff.tex
\begin{table*}[ht]
\centering
\renewcommand{\arraystretch}{1.2}
\setlength{\tabcolsep}{4pt}
\small
\resizebox{\textwidth}{!}{
\begin{tabular}{c|c|cc|cc|cc|cc|cc|c}
\specialrule{.15em}{.05em}{.05em}
\multirow{2}{*}{\textbf{Backbone}} & \multirow{2}{*}{\textbf{Method}} 
& \multicolumn{2}{c|}{\textbf{Automobile}} 
& \multicolumn{2}{c|}{\textbf{Cat}} 
& \multicolumn{2}{c|}{\textbf{Dog}} 
& \multicolumn{2}{c|}{\textbf{Horse}} 
& \multicolumn{2}{c|}{\textbf{Truck}} 
& \multirow{2}{*}{\textbf{$t_{setup}$  (s) $\downarrow$}} \\
& 
&  \textbf{$\Delta$UA $\downarrow$}  & \textbf{$\Delta$FID$_{\tt R}$ $\downarrow$} 
&  \textbf{$\Delta$UA $\downarrow$}  & \textbf{$\Delta$FID$_{\tt R}$ $\downarrow$} 
&  \textbf{$\Delta$UA $\downarrow$}  & \textbf{$\Delta$FID$_{\tt R}$ $\downarrow$} 
&  \textbf{$\Delta$UA $\downarrow$}  & \textbf{$\Delta$FID$_{\tt R}$ $\downarrow$} 
&  \textbf{$\Delta$UA $\downarrow$}  & \textbf{$\Delta$FID$_{\tt R}$ $\downarrow$} 
&  \\
\hline

\multirow{2}{*}{\shortstack{DDPM}} & \cellcolor{retrain} Retrain & \cellcolor{retrain}  0.00 (100.00) & \cellcolor{retrain} 0.00 (11.21) & \cellcolor{retrain}  0.00 (99.98) & \cellcolor{retrain} 0.00 (10.84) & \cellcolor{retrain} 0.00 (100.00) & \cellcolor{retrain} 0.00 (10.88) & \cellcolor{retrain} 0.00 (99.98) & \cellcolor{retrain} 0.00 (10.00) & \cellcolor{retrain} 0.00 (99.90) & \cellcolor{retrain} 0.00 (10.15) & \cellcolor{retrain} 145601.4 \\ 
& EraseDiff~\cite{wu2025erasing} & 28.06 (71.94) & 2.66 (8.55) & 72.42 (27.56) & 1.96 (8.88) & 51.40 (48.60) & 2.16 (8.72) & 42.24 (57.76) & 1.20 (8.80) & 38.10 (61.80) & 1.80 (8.35) & 1429.6 \\
\hline
\multirow{2}{*}{\shortstack{SPM}} & \cellcolor{retrain} Retrain & \cellcolor{retrain} 0.00 (99.76) & \cellcolor{retrain} 0.00 (7.40) & \cellcolor{retrain} 0.00 (99.26) & \cellcolor{retrain} 0.00 (7.15) & \cellcolor{retrain}  0.00 (99.78)& \cellcolor{retrain} 0.00 (6.52)& \cellcolor{retrain} 0.00 (99.84) & \cellcolor{retrain} 0.00 (6.89) & \cellcolor{retrain}  0.00 (99.64)& \cellcolor{retrain}  0.00 (7.28)& \cellcolor{retrain} 90935.9 \\
& Ours & \textbf{0.10} (99.86) & \textbf{0.38} (7.78) & \textbf{0.04} (99.22) &\textbf{0.50} (7.65) & \textbf{0.04 }(99.82) & \textbf{0.08} (6.60) &\textbf{ 0.08} (99.76) & \textbf{0.29} (7.18) & \textbf{ 0.06 }(99.58) & \textbf{0.40} (7.68) &\textbf{$<$1} \\
\specialrule{.15em}{.05em}{.05em}
\end{tabular}
}
\caption{\textbf{CIFAR-10 class-wise forgetting on generation.} Comparison with EraseDiff.}
\label{tab:supp_EraseDiff}
\end{table*}

%% file: figs/gene_cifar_more_sample.tex
\begin{figure*}[t]
\small
\centering
\renewcommand{\arraystretch}{1.2}
\setlength{\tabcolsep}{2pt}
\begin{tabular}{>{\centering\arraybackslash}m{1.8cm}|*{5}{>{\centering\arraybackslash}m{0.07\textwidth}}|*{5}{>{\centering\arraybackslash}m{0.07\textwidth}}}
\specialrule{.15em}{.05em}{.05em}
  & \multicolumn{5}{c}{\textbf{Class 0 - 4}} & \multicolumn{5}{c}{\textbf{Class 5 - 9}} \\
& planes & cars & birds & cats & deer & dogs & frogs & horses & ships & trucks \\
\midrule
\multirow{6}{*}{\makecell{Pre-trained\\SPM}}
& \includegraphics[width=0.055\textwidth]{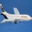} &
  \includegraphics[width=0.055\textwidth]{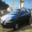} &
  \includegraphics[width=0.055\textwidth]{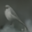} &
    \begin{overpic}[width=0.055\textwidth]{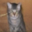} 
\linethickness{0.8pt}
    \put(0,0){\color{blue}\framebox(100,100){}}
\end{overpic} &
  \includegraphics[width=0.055\textwidth]{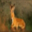} &
  \includegraphics[width=0.055\textwidth]{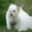} &
    \begin{overpic}[width=0.055\textwidth]{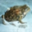} 
\linethickness{0.8pt}
    \put(0,0){\color{blue}\framebox(100,100){}}
\end{overpic} &
  \includegraphics[width=0.055\textwidth]{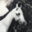} &
  \includegraphics[width=0.055\textwidth]{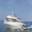} &
  \includegraphics[width=0.055\textwidth]{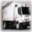} 
  \\ &
  \includegraphics[width=0.055\textwidth]{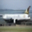} &
  \includegraphics[width=0.055\textwidth]{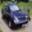} &
  \includegraphics[width=0.055\textwidth]{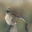} &
    \begin{overpic}[width=0.055\textwidth]{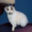} 
\linethickness{0.8pt}
    \put(0,0){\color{blue}\framebox(100,100){}}
\end{overpic} &
  \includegraphics[width=0.055\textwidth]{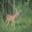} &
  \includegraphics[width=0.055\textwidth]{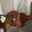} &
    \begin{overpic}[width=0.055\textwidth]{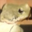} 
\linethickness{0.8pt}
    \put(0,0){\color{blue}\framebox(100,100){}}
\end{overpic} &
  \includegraphics[width=0.055\textwidth]{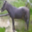} &
  \includegraphics[width=0.055\textwidth]{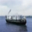} &
  \includegraphics[width=0.055\textwidth]{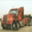} \\&
  \includegraphics[width=0.055\textwidth]{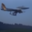} &
  \includegraphics[width=0.055\textwidth]{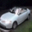} &
  \includegraphics[width=0.055\textwidth]{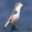} &
    \begin{overpic}[width=0.055\textwidth]{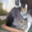} 
\linethickness{0.8pt}
    \put(0,0){\color{blue}\framebox(100,100){}}
\end{overpic} &
  \includegraphics[width=0.055\textwidth]{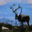} &
  \includegraphics[width=0.055\textwidth]{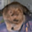} &
    \begin{overpic}[width=0.055\textwidth]{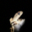} 
\linethickness{0.8pt}
    \put(0,0){\color{blue}\framebox(100,100){}}
\end{overpic} &
  \includegraphics[width=0.055\textwidth]{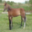} &
  \includegraphics[width=0.055\textwidth]{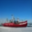} &
  \includegraphics[width=0.055\textwidth]{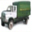} \\
\midrule
\multirow{6}{*}{\makecell{Unlearn \\ 0 - 4}} &
  \includegraphics[width=0.055\textwidth]{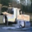} &
  \includegraphics[width=0.055\textwidth]{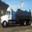} &
  \includegraphics[width=0.055\textwidth]{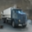} &
\begin{overpic}[width=0.055\textwidth]{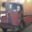}
    \linethickness{0.8pt}
    \put(0,0){\color{red}\framebox(100,100){}}
\end{overpic} &
  \includegraphics[width=0.055\textwidth]{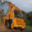} &
  \includegraphics[width=0.055\textwidth]{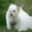} &
  \begin{overpic}[width=0.055\textwidth]{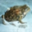} 
\linethickness{0.8pt}
    \put(0,0){\color{blue}\framebox(100,100){}}
\end{overpic} &
  \includegraphics[width=0.055\textwidth]{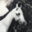} &
  \includegraphics[width=0.055\textwidth]{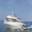} &
  \includegraphics[width=0.055\textwidth]{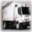} \\
  &
  \includegraphics[width=0.055\textwidth]{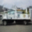} &
  \includegraphics[width=0.055\textwidth]{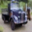} &
  \includegraphics[width=0.055\textwidth]{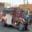} &
\begin{overpic}[width=0.055\textwidth]{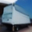}
    \linethickness{0.8pt}
    \put(0,0){\color{red}\framebox(100,100){}}
\end{overpic} &
  \includegraphics[width=0.055\textwidth]{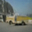} &
  \includegraphics[width=0.055\textwidth]{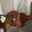} &
  \begin{overpic}[width=0.055\textwidth]{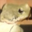} 
\linethickness{0.8pt}
    \put(0,0){\color{blue}\framebox(100,100){}}
\end{overpic} &
  \includegraphics[width=0.055\textwidth]{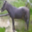} &
  \includegraphics[width=0.055\textwidth]{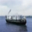} &
  \includegraphics[width=0.055\textwidth]{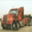}
  \\
  &
  \includegraphics[width=0.055\textwidth]{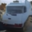} &
  \includegraphics[width=0.055\textwidth]{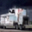} &
  \includegraphics[width=0.055\textwidth]{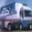} &
\begin{overpic}[width=0.055\textwidth]{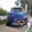}
    \linethickness{0.8pt}
    \put(0,0){\color{red}\framebox(100,100){}}
\end{overpic} &
  \includegraphics[width=0.055\textwidth]{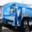} &
  \includegraphics[width=0.055\textwidth]{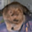} &
  \begin{overpic}[width=0.055\textwidth]{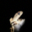} 
\linethickness{0.8pt}
    \put(0,0){\color{blue}\framebox(100,100){}}
\end{overpic} &
  \includegraphics[width=0.055\textwidth]{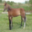} &
  \includegraphics[width=0.055\textwidth]{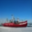} &
  \includegraphics[width=0.055\textwidth]{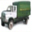}\\
\midrule
\multirow{6}{*}{\makecell{Unlearn \\ 5 - 9}} &
  \includegraphics[width=0.055\textwidth]{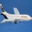} &
  \includegraphics[width=0.055\textwidth]{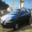} &
  \includegraphics[width=0.055\textwidth]{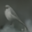} &
      \begin{overpic}[width=0.055\textwidth]{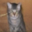} 
\linethickness{0.8pt}
    \put(0,0){\color{blue}\framebox(100,100){}}
\end{overpic} &\includegraphics[width=0.055\textwidth]{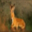} &
  \includegraphics[width=0.055\textwidth]{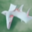} &
  \begin{overpic}[width=0.055\textwidth]{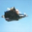}
    \linethickness{0.8pt}
    \put(0,0){\color{red}\framebox(100,100){}}
\end{overpic} &
  \includegraphics[width=0.055\textwidth]{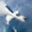} &
  \includegraphics[width=0.055\textwidth]{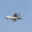} &
  \includegraphics[width=0.055\textwidth]{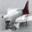} \\
  &
  \includegraphics[width=0.055\textwidth]{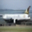} &
  \includegraphics[width=0.055\textwidth]{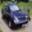} &
  \includegraphics[width=0.055\textwidth]{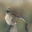} &
      \begin{overpic}[width=0.055\textwidth]{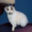} 
\linethickness{0.8pt}
    \put(0,0){\color{blue}\framebox(100,100){}}
\end{overpic} &\includegraphics[width=0.055\textwidth]{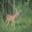} &
  \includegraphics[width=0.055\textwidth]{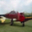} &
  \begin{overpic}[width=0.055\textwidth]{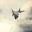}
    \linethickness{0.8pt}
    \put(0,0){\color{red}\framebox(100,100){}}
\end{overpic} &
  \includegraphics[width=0.055\textwidth]{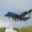} &
  \includegraphics[width=0.055\textwidth]{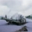} &
  \includegraphics[width=0.055\textwidth]{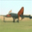}
  \\
  &
  \includegraphics[width=0.055\textwidth]{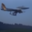} &
  \includegraphics[width=0.055\textwidth]{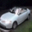} &
  \includegraphics[width=0.055\textwidth]{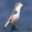} &
      \begin{overpic}[width=0.055\textwidth]{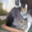} 
\linethickness{0.8pt}
    \put(0,0){\color{blue}\framebox(100,100){}}
\end{overpic} &\includegraphics[width=0.055\textwidth]{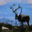} &
  \includegraphics[width=0.055\textwidth]{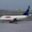} &
  \begin{overpic}[width=0.055\textwidth]{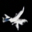}
    \linethickness{0.8pt}
    \put(0,0){\color{red}\framebox(100,100){}}
\end{overpic} &
  \includegraphics[width=0.055\textwidth]{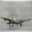} &
  \includegraphics[width=0.055\textwidth]{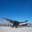} &
  \includegraphics[width=0.055\textwidth]{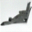}\\
\specialrule{.15em}{.05em}{.05em}
\end{tabular}
\vspace{-0.15cm}
\caption{{\bf Qualitative comparison of unlearned SPM}. %
When a class is unlearned (\eg, cats or frogs framed in \textcolor{red}{red}), the model avoids generating that concept and produces samples resembling remaining classes (\eg, a truck replacing a cat). Additionally, the generations from the remaining classes are left unchanged (framed in \textcolor{blue}{blue}).}
\label{fig:gene_cifar_more}
\vspace{-.1cm}
\end{figure*}

%% file: figs/gmm_visual.tex
\begin{figure*}[h]
\centering
\renewcommand{\arraystretch}{1.2}
\setlength{\tabcolsep}{2pt}
\begin{tabular}{*{10}{>{\centering\arraybackslash}m{0.07\textwidth}}}
\specialrule{.15em}{.05em}{.05em}
  planes & cars & birds & cats & deer & dogs & frogs & horses & ships & trucks \\
\midrule

  \includegraphics[width=0.055\textwidth]{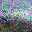} &
  \includegraphics[width=0.055\textwidth]{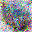} &
  \includegraphics[width=0.055\textwidth]{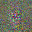} &
  \includegraphics[width=0.055\textwidth]{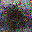} &
  \includegraphics[width=0.055\textwidth]{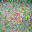} &
  \includegraphics[width=0.055\textwidth]{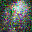} &
  \includegraphics[width=0.055\textwidth]{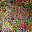} &
  \includegraphics[width=0.055\textwidth]{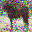} &
  \includegraphics[width=0.055\textwidth]{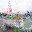} &
  \includegraphics[width=0.055\textwidth]{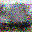} \\
\midrule

  \includegraphics[width=0.055\textwidth]{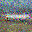} &
  \includegraphics[width=0.055\textwidth]{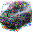} &
  \includegraphics[width=0.055\textwidth]{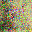} &
  \includegraphics[width=0.055\textwidth]{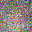} &
  \includegraphics[width=0.055\textwidth]{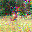} &
  \includegraphics[width=0.055\textwidth]{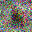} &
  \includegraphics[width=0.055\textwidth]{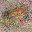} &
  \includegraphics[width=0.055\textwidth]{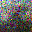} &
  \includegraphics[width=0.055\textwidth]{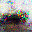} &
  \includegraphics[width=0.055\textwidth]{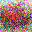} \\
\midrule

  \includegraphics[width=0.055\textwidth]{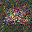} &
  \includegraphics[width=0.055\textwidth]{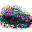} &
  \includegraphics[width=0.055\textwidth]{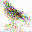} &
  \includegraphics[width=0.055\textwidth]{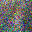} &
  \includegraphics[width=0.055\textwidth]{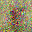} &
  \includegraphics[width=0.055\textwidth]{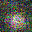} &
  \includegraphics[width=0.055\textwidth]{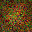} &
  \includegraphics[width=0.055\textwidth]{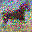} &
  \includegraphics[width=0.055\textwidth]{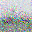} &
  \includegraphics[width=0.055\textwidth]{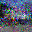} \\
\specialrule{.15em}{.05em}{.05em}
\end{tabular}
\caption{\textbf{Generated images of GMM on CIFAR-10.} The low-quality outputs demonstrate the limitations of traditional non-parametric models in handling complex generative tasks.}
\label{fig:gmm_cifar}
\vspace{-.3cm}
\end{figure*}

%% file: tabs/cls_cifar_random.tex
\begin{table*}[ht]
\centering
\resizebox{\textwidth}{!}{ %
\begin{tabular}{c|c|ccccc|ccccc}
\specialrule{.15em}{.05em}{.05em}
    \multirow{2}{*}{\textbf{Backbone}} & \multirow{2}{*}{\textbf{Method}} & \multicolumn{5}{c|}{\textbf{Random Unlearning (10$\%$)}} & \multicolumn{5}{c}{\textbf{Random Unlearning (50$\%$)}} \\
 &  & \textbf{$\text{PG}_{\tt H}$ $\downarrow$} & \textbf{ $\text{PG}_{\tt S}$$\downarrow$} & \textbf{$\Delta$UA $\downarrow$} & \textbf{$\Delta$RA $\downarrow$} & \textbf{$\Delta$TA $\downarrow$} &   \textbf{$\text{PG}_{\tt H}$ $\downarrow$} & \textbf{ $\text{PG}_{\tt S}$$\downarrow$}  &\textbf{$\Delta$UA $\downarrow$} & \textbf{$\Delta$RA $\downarrow$} & \textbf{$\Delta$TA $\downarrow$}   \\
\hline
\multirow{9}{*}{\shortstack{ResNet18}} & \cellcolor{retrain} Retrain & \cellcolor{retrain} 0.00 & \cellcolor{retrain} 0.00  & \cellcolor{retrain} 0.00 (5.35) &  \cellcolor{retrain} 0.00 (100.00)& \cellcolor{retrain} 0.00 (94.25)  & \cellcolor{retrain} 0.00 & \cellcolor{retrain} 0.00  & \cellcolor{retrain} 0.00 (8.48) &  \cellcolor{retrain} 0.00 (100.00)& \cellcolor{retrain} 0.00 (91.08)  \\
 & GA~\cite{thudi2022unrolling} & \textbf{ 5.23}&0.17  & 4.80 (0.55) & 4.80 (99.52) &  0.48 (94.57)&  8.05&0.37  &7.94 (0.54) & 0.49 (99.51) & 3.45 (94.53) \\
 & FT~\cite{warnecke2021machine} & 5.89 &0.19  & 4.46 (0.89)  & 0.16 (99.84) & 0.32 (93.93) &  8.50&0.36  & 7.52 (0.96)&0.18 (99.82) & 2.67 (93.75) \\
 & IU~\cite{koh2017understanding} &  26.60& 1.69 & 17.15 (22.50) & 22.32 (77.68) & 21.09 (73.16) &  83.63 &8.29  &74.83 (83.32)& 83.29 (16.71)&74.53 (16.55)  \\
 & BE~\cite{chen2023boundary} &  5.30&0.17  &  4.83 (0.52)&0.48 (99.52)  & 0.29 (94.54) & 34.70 &1.22  & 17.33 (25.81)& 25.93 (74.07)&23.37 (67.71) \\
 & BS~\cite{chen2023boundary} &  5.31&0.17  & 4.82 (0.53) &0.49 (99.51)  & 0.28 (94.53)  & 26.55 &0.85  &  8.82 (17.30)& 17.36 (82.64)& 15.39 (75.69)\\
 & $\ell_1$-sparse~\cite{jia2023model} &  6.03&0.19  & 4.29 (1.06) & 0.24 (99.76) &0.50 (93.75)  &  8.53&0.35  & 7.53 (0.95) & \textbf{0.14 }(99.86) &2.67 (93.75)\\
 & SalUn~\cite{fan2024salun} &  5.40&0.15  &  3.89 (1.46)&0.14 (99.86)  &\textbf{0.08 }(94.17)  &  7.94&\textbf{0.33}  & 7.34 (1.14)& 0.52 (99.48)& 2.53 (93.61)\\
 & MUNBa~\cite{wu2025munba} &  5.39&\textbf{0.14}  & 4.39 (0.96)&  \textbf{0.03 }(99.97)&  0.19 (94.44)& 12.94 &1.50   &\textbf{2.62} (5.86) & 3.26 (96.74)& \textbf{2.09 }(88.99) \\
\hline
ResNet18-KNN &  \cellcolor{retrain} Retrain & \cellcolor{retrain} 0.00 & \cellcolor{retrain} 0.00  & \cellcolor{retrain} 0.00 (0.02) &  \cellcolor{retrain} 0.00 (99.98) & \cellcolor{retrain} 0.00 (94.45) & \cellcolor{retrain} 0.00 & \cellcolor{retrain} 0.00  & \cellcolor{retrain}0.00 (0.02) &  \cellcolor{retrain}0.00 (99.98) & \cellcolor{retrain} 0.00 (94.45)\\
\hline
  \multirow{2}{*}{\shortstack{ResNet18-SPM }} 
& \cellcolor{retrain} Retrain  & \cellcolor{retrain} 0.00 & \cellcolor{retrain} 0.00  & \cellcolor{retrain}  0.00 (0.18)&  \cellcolor{retrain} 0.00 (99.43)& \cellcolor{retrain} 0.00 (93.89)  & \cellcolor{retrain} 0.00 & \cellcolor{retrain} 0.00  & \cellcolor{retrain} 0.00 (4.89)&  \cellcolor{retrain} 0.00 (95.96)& \cellcolor{retrain} 0.00 (91.08)\\
& Ours&  5.54&0.19  &\textbf{ 0.16 }(0.02) & 0.55 (99.98) &  0.56 (94.45)&  \textbf{7.83} &0.36  &4.87 (0.02) &  4.02 (99.98) & 3.37 (94.45) \\
\specialrule{.15em}{.05em}{.05em}
\end{tabular}
}
\caption{\textbf{CIFAR-10 random unlearning on classification.} Results are averaged over 10 scenarios. We observe that our method can achieve close performance to the retrained model. The UA, RA, and TA values are reported in parentheses with the corresponding gaps.}
\label{tab:cls_cifar_random}
\end{table*}

%% file: ref.bib
@String(CVPR= {IEEE Conf. Comput. Vis. Pattern Recog.})

@String(ICCV= {Int. Conf. Comput. Vis.})

@String(ECCV= {Eur. Conf. Comput. Vis.})

@String(NIPS= {Adv. Neural Inform. Process. Syst.})

@String(ICLR = {Int. Conf. Learn. Represent.})

@String(AAAI = {AAAI})

@String(CVPR  = {CVPR})

@String(ICCV  = {ICCV})

@String(ECCV  = {ECCV})

@String(NIPS  = {NeurIPS})

@String(ICLR  = {ICLR})

@book{murphy2022probabilistic,
  title={Probabilistic machine learning: an introduction},
  author={Murphy, Kevin P},
  year={2022},
  publisher={MIT press}
}

@article{dempster1977maximum,
  title={Maximum likelihood from incomplete data via the {EM} algorithm},
  author={Dempster, Arthur P and Laird, Nan M and Rubin, Donald B},
  journal={Journal of the royal statistical society},
  year={1977},
}

@inproceedings{heusel2017gans,
  title={{GANs} trained by a two time-scale update rule converge to a local {Nash} equilibrium},
  author={Heusel, Martin and Ramsauer, Hubert and Unterthiner, Thomas and Nessler, Bernhard and Hochreiter, Sepp},
  booktitle=NIPS,
  year={2017}
}

@article{cover1967nearest,
  title={Nearest neighbor pattern classification},
  author={Cover, Thomas and Hart, Peter},
  journal={IEEE TIT},
  year={1967},
}

@book{wasserman2006all,
  title={All of nonparametric statistics},
  author={Wasserman, Larry},
  year={2006},
  publisher={Springer}
}

@book{simonoff2012smoothing,
  title={Smoothing methods in statistics},
  author={Simonoff, Jeffrey S},
  year={2012},
  publisher={Springer Science \& Business Media}
}

@techreport{krizhevsky2009learning,
  title       = {Learning Multiple Layers of Features from Tiny Images},
  author      = {Krizhevsky, Alex},
  year        = {2009},
  institution = {University of Toronto},
  url         = {https://www.cs.toronto.edu/~kriz/learning-features-2009-TR.pdf}
}

@misc{EU2016GDPR,
  title        = {Regulation {(EU)} 2016/679 of the European Parliament and of the Council of 27 {April} 2016 on the protection of natural persons with regard to the processing of personal data and on the free movement of such data (General Data Protection Regulation)},
  author       = {{European Union}},
  year         = {2016},
  note         = {OJ L 119, 4.5.2016, p. 1–88},
  url          = {https://eur-lex.europa.eu/eli/reg/2016/679/oj}
}

@article{battaglia2018relational,
  title={Relational inductive biases, deep learning, and graph networks},
  author={Battaglia, Peter W and Hamrick, Jessica B and Bapst, Victor and Sanchez-Gonzalez, Alvaro and Zambaldi, Vinicius and Malinowski, Mateusz and Tacchetti, Andrea and Raposo, David and Santoro, Adam and Faulkner, Ryan and others},
  journal={arXiv preprint arXiv:1806.01261},
  year={2018}
}

@inproceedings{
kossen2021selfattention,
title={Self-Attention Between Datapoints: Going Beyond Individual Input-Output Pairs in Deep Learning},
author={Jannik Kossen and Neil Band and Clare Lyle and Aidan Gomez and Tom Rainforth and Yarin Gal},
booktitle={Proc. NeurIPS},
year={2021},
}

@inproceedings{blattmann2022retrieval,
  title={Retrieval-augmented diffusion models},
  author={Blattmann, Andreas and Rombach, Robin and Oktay, Kaan and M{\"u}ller, Jonas and Ommer, Bj{\"o}rn},
  booktitle={Proc. NeurIPS},
  volume={35},
  year={2022}
}

@inproceedings{
sheynin2023knndiffusion,
title={K{NN}-Diffusion: Image Generation via Large-Scale Retrieval},
author={Shelly Sheynin and Oron Ashual and Adam Polyak and Uriel Singer and Oran Gafni and Eliya Nachmani and Yaniv Taigman},
booktitle={Proc. ICLR},
year={2023},
}

@inproceedings{lee2019set,
  title={Set transformer: A framework for attention-based permutation-invariant neural networks},
  author={Lee, Juho and Lee, Yoonho and Kim, Jungtaek and Kosiorek, Adam and Choi, Seungjin and Teh, Yee Whye},
  booktitle={Proc. ICML},
  year={2019}
}

@inproceedings{wilson2016deep,
  title={Deep kernel learning},
  author={Wilson, Andrew Gordon and Hu, Zhiting and Salakhutdinov, Ruslan and Xing, Eric P},
  booktitle={Proc. AISTATS},
  year={2016}
}

@inproceedings{qi2018semi,
  title={Semi-parametric image synthesis},
  author={Qi, Xiaojuan and Chen, Qifeng and Jia, Jiaya and Koltun, Vladlen},
  booktitle={Proc. CVPR},
  year={2018}
}

@article{seeger2004gaussian,
  title={Gaussian processes for machine learning},
  author={Seeger, Matthias},
  journal={International journal of neural systems},
  volume={14},
  year={2004},
  publisher={World Scientific}
}

@inproceedings{heng2023selective,
  title={Selective amnesia: A continual learning approach to forgetting in deep generative models},
  author={Heng, Alvin and Soh, Harold},
  booktitle={Proc. NeurIPS},
  volume={36},
  year={2023}
}

@inproceedings{jia2023model,
  title={Model sparsity can simplify machine unlearning},
  author={Jia, Jinghan and Liu, Jiancheng and Ram, Parikshit and Yao, Yuguang and Liu, Gaowen and Liu, Yang and Sharma, Pranay and Liu, Sijia},
  booktitle={Proc. NeurIPS},
  volume={36},
  year={2023}
}

@inproceedings{
    fan2024salun,
    title={Sal{U}n: Empowering Machine Unlearning via Gradient-based Weight Saliency in Both Image Classification and Generation},
    author={Chongyu Fan and Jiancheng Liu and Yihua Zhang and Eric Wong and Dennis Wei and Sijia Liu},
    booktitle={Proc. ICLR},
    year={2024},
}

@article{warnecke2021machine,
  title={Machine unlearning of features and labels},
  author={Warnecke, Alexander and Pirch, Lukas and Wressnegger, Christian and Rieck, Konrad},
  journal={arXiv preprint arXiv:2108.11577},
  year={2021}
}

@inproceedings{thudi2022unrolling,
  title={{Unrolling SGD}: Understanding factors influencing machine unlearning},
  author={Thudi, Anvith and Deza, Gabriel and Chandrasekaran, Varun and Papernot, Nicolas},
  booktitle={Proc. EuroS\&P},
  year={2022},
}

@inproceedings{koh2017understanding,
  title={Understanding black-box predictions via influence functions},
  author={Koh, Pang Wei and Liang, Percy},
  booktitle={Proc. ICML},
  year={2017},
}

@inproceedings{chen2023boundary,
  title={Boundary unlearning: Rapid forgetting of deep networks via shifting the decision boundary},
  author={Chen, Min and Gao, Weizhuo and Liu, Gaoyang and Peng, Kai and Wang, Chen},
  booktitle={Proc. CVPR},
  year={2023}
}

@inproceedings{gandikota2023erasing,
  title={Erasing Concepts from Diffusion Models},
  author={Rohit Gandikota and Joanna Materzy\'nska and Jaden Fiotto-Kaufman and David Bau},
  booktitle={Proc. CVPR},
  year={2023}
}

@inproceedings{golatkar2020eternal,
  title={Eternal sunshine of the spotless net: Selective forgetting in deep networks},
  author={Golatkar, Aditya and Achille, Alessandro and Soatto, Stefano},
  booktitle={Proc. CVPR},
  year={2020}
}

@inproceedings{huang2024unified,
  title={Unified Gradient-Based Machine Unlearning with Remain Geometry Enhancement},
  author={Huang, Zhehao and Cheng, Xinwen and Zheng, JingHao and Wang, Haoran and He, Zhengbao and Li, Tao and Huang, Xiaolin},
  booktitle={Proc. NeurIPS},
  year={2024}
}

@inproceedings{fmn,
    author    = {Zhang, Gong and Wang, Kai and Xu, Xingqian and Wang, Zhangyang and Shi, Humphrey},
    title     = {Forget-Me-Not: Learning to Forget in Text-to-Image Diffusion Models},
    booktitle={Proc. CVPR},
    year      = {2024},
}

@inproceedings{yoshihashi2019classification,
  title={Classification-reconstruction learning for open-set recognition},
  author={Yoshihashi, Ryota and Shao, Wen and Kawakami, Rei and You, Shaodi and Iida, Makoto and Naemura, Takeshi},
  booktitle={Proc. CVPR},
  year={2019}
}

@inproceedings{zaheer2017deep,
  title={Deep sets},
  author={Zaheer, Manzil and Kottur, Satwik and Ravanbakhsh, Siamak and Poczos, Barnabas and Salakhutdinov, Russ R and Smola, Alexander J},
  booktitle={Proc. NeurIPS},
  volume={30},
  year={2017}
}

@inproceedings{vaswani2017attention,
  title={Attention is all you need},
  author={Vaswani, Ashish and Shazeer, Noam and Parmar, Niki and Uszkoreit, Jakob and Jones, Llion and Gomez, Aidan N and Kaiser, {\L}ukasz and Polosukhin, Illia},
  booktitle={Proc. NeurIPS},
  volume={30},
  year={2017}
}

@inproceedings{ddpm,
 author = {Ho, Jonathan and Jain, Ajay and Abbeel, Pieter},
 booktitle = {Proc. NeurIPS},
 title = {Denoising Diffusion Probabilistic Models},
 volume = {33},
 year = {2020}
}

@inproceedings{
ddim,
title={Denoising Diffusion Implicit Models},
author={Jiaming Song and Chenlin Meng and Stefano Ermon},
booktitle={Proc. ICLR},
year={2021},
}

@inproceedings{he2016deep,
  title={Deep residual learning for image recognition},
  author={He, Kaiming and Zhang, Xiangyu and Ren, Shaoqing and Sun, Jian},
  booktitle={Proc. CVPR},
  year={2016}
}

@inproceedings{deng2009imagenet,
  title={Imagenet: A large-scale hierarchical image database},
  author={Deng, Jia and Dong, Wei and Socher, Richard and Li, Li-Jia and Li, Kai and Fei-Fei, Li},
  booktitle={Proc. CVPR},
  year={2009},
}

@article{liu2025rethinking,
  title={Rethinking machine unlearning for large language models},
  author={Liu, Sijia and Yao, Yuanshun and Jia, Jinghan and Casper, Stephen and Baracaldo, Nathalie and Hase, Peter and Yao, Yuguang and Liu, Chris Yuhao and Xu, Xiaojun and Li, Hang and others},
  journal={Nature Machine Intelligence},
  year={2025},
  publisher={Nature Publishing Group UK London}
}

@article{shaik2024exploring,
  title={Exploring the landscape of machine unlearning: A comprehensive survey and taxonomy},
  author={Shaik, Thanveer and Tao, Xiaohui and Xie, Haoran and Li, Lin and Zhu, Xiaofeng and Li, Qing},
  journal={IEEE TNNLS},
  year={2024},
}

@article{wang2024machine,
  title={Machine unlearning: A comprehensive survey},
  author={Wang, Weiqi and Tian, Zhiyi and Zhang, Chenhan and Yu, Shui},
  journal={arXiv preprint arXiv:2405.07406},
  year={2024}
}

@inproceedings{guo2019certified,
  title={Certified data removal from machine learning models},
  author={Guo, Chuan and Goldstein, Tom and Hannun, Awni and Van Der Maaten, Laurens},
  booktitle={Proc. ICML},
  year={2020}
}

@inproceedings{neel2021descent,
  title={Descent-to-delete: Gradient-based methods for machine unlearning},
  author={Neel, Seth and Roth, Aaron and Sharifi-Malvajerdi, Saeed},
  booktitle={Algorithmic Learning Theory},
  year={2021},
  organization={PMLR}
}

@inproceedings{sekhari2021remember,
  title={Remember what you want to forget: Algorithms for machine unlearning},
  author={Sekhari, Ayush and Acharya, Jayadev and Kamath, Gautam and Suresh, Ananda Theertha},
  booktitle={Proc. NeurIPS},
  volume={34},
  year={2021}
}

@inproceedings{ginart2019making,
  title={Making ai forget you: Data deletion in machine learning},
  author={Ginart, Antonio and Guan, Melody and Valiant, Gregory and Zou, James Y},
  booktitle={Proc. NeurIPS},
  volume={32},
  year={2019}
}

@inproceedings{ullah2021machine,
  title={Machine unlearning via algorithmic stability},
  author={Ullah, Enayat and Mai, Tung and Rao, Anup and Rossi, Ryan A and Arora, Raman},
  booktitle={Proc. COLT},
  year={2021},
}

@article{goel2022towards,
  title={Towards adversarial evaluations for inexact machine unlearning},
  author={Goel, Shashwat and Prabhu, Ameya and Sanyal, Amartya and Lim, Ser-Nam and Torr, Philip and Kumaraguru, Ponnurangam},
  journal={arXiv preprint arXiv:2201.06640},
  year={2022}
}

@article{mahadevan2021certifiable,
  title={Certifiable machine unlearning for linear models},
  author={Mahadevan, Ananth and Mathioudakis, Michael},
  journal={arXiv preprint arXiv:2106.15093},
  year={2021}
}

@inproceedings{tarun2023deep,
  title={Deep regression unlearning},
  author={Tarun, Ayush Kumar and Chundawat, Vikram Singh and Mandal, Murari and Kankanhalli, Mohan},
  booktitle={Proc. ICML},
  year={2023}
}

@inproceedings{gandikota2024unified,
  title={Unified concept editing in diffusion models},
  author={Gandikota, Rohit and Orgad, Hadas and Belinkov, Yonatan and Materzy{\'n}ska, Joanna and Bau, David},
  booktitle={Proc. WACV},
  year={2024}
}

@article{brundage2018malicious,
  title={The malicious use of artificial intelligence: Forecasting, prevention, and mitigation},
  author={Brundage, Miles and Avin, Shahar and Clark, Jack and Toner, Helen and Eckersley, Peter and Garfinkel, Ben and Dafoe, Allan and Scharre, Paul and Zeitzoff, Thomas and Filar, Bobby and others},
  journal={arXiv preprint arXiv:1802.07228},
  year={2018}
}

@article{marchal2024generative,
  title={Generative {AI} misuse: A taxonomy of tactics and insights from real-world data},
  author={Marchal, Nahema and Xu, Rachel and Elasmar, Rasmi and Gabriel, Iason and Goldberg, Beth and Isaac, William},
  journal={arXiv preprint arXiv:2406.13843},
  year={2024}
}

@techreport{ISRSAA2025,
title = {International {AI} Safety Report},
author = {Bengio, Yoshua and Mindermann, S{\"o}ren and Privitera, Daniel and Besiroglu, Tamay and
Bommasani, Rishi and others},
year = {2025},
number = {DSIT 2025/001},
URL = {https://www.gov.uk/government/publications/international-ai-safety-report-2025}
}

@article{gyevnar2025ai,
  title={AI safety for everyone},
  author={Gyevnar, Balint and Kasirzadeh, Atoosa},
  journal={Nature Machine Intelligence},
  year={2025},
  publisher={Nature Publishing Group UK London}
}

@article{barez2025open,
  title={Open problems in machine unlearning for ai safety},
  author={Barez, Fazl and Fu, Tingchen and Prabhu, Ameya and Casper, Stephen and Sanyal, Amartya and Bibi, Adel and O'Gara, Aidan and Kirk, Robert and Bucknall, Ben and Fist, Tim and others},
  journal={arXiv preprint arXiv:2501.04952},
  year={2025}
}

@inproceedings{cao2015towards,
  title={Towards making systems forget with machine unlearning},
  author={Cao, Yinzhi and Yang, Junfeng},
  booktitle={IEEE SSP},
  year={2015}
}

@inproceedings{wu2022puma,
  title={{PUMA}: Performance unchanged model augmentation for training data removal},
  author={Wu, Ga and Hashemi, Masoud and Srinivasa, Christopher},
  booktitle={Proc. AAAI},
  year={2022}
}

@article{nguyen2020variational,
  title={Variational {Bayesian} unlearning},
  author={Nguyen, Quoc Phong and Low, Bryan Kian Hsiang and Jaillet, Patrick},
  journal={Proc. NeurIPS},
  year={2020}
}

@inproceedings{brack2023sega,
  title={{SEGA}: Instructing text-to-image models using semantic guidance},
  author={Brack, Manuel and Friedrich, Felix and Hintersdorf, Dominik and Struppek, Lukas and Schramowski, Patrick and Kersting, Kristian},
  booktitle={Proc. NeurIPS},
  year={2023}
}

@inproceedings{schramowski2023safe,
  title={Safe latent diffusion: Mitigating inappropriate degeneration in diffusion models},
  author={Schramowski, Patrick and Brack, Manuel and Deiseroth, Bj{\"o}rn and Kersting, Kristian},
  booktitle={Proc. CVPR},
  year={2023}
}

@inproceedings{kim2023towards,
  title={Towards safe self-distillation of internet-scale text-to-image diffusion models},
  author={Kim, Sanghyun and Jung, Seohyeon and Kim, Balhae and Choi, Moonseok and Shin, Jinwoo and Lee, Juho},
  booktitle={ICML Workshop on Challenges in Deployable Generative AI},
  year={2023}
}

@inproceedings{kumari2023ablating,
  title={Ablating concepts in text-to-image diffusion models},
  author={Kumari, Nupur and Zhang, Bingliang and Wang, Sheng-Yu and Shechtman, Eli and Zhang, Richard and Zhu, Jun-Yan},
  booktitle={Proc. ICCV},
  year={2023}
}

@inproceedings{wu2025munba,
  title={{MUNBa}: Machine unlearning via {Nash} bargaining},
  author={Wu, Jing and Harandi, Mehrtash},
  booktitle={Proc. ICCV},
  year={2025}
}

@inproceedings{wu2025erasing,
  title={Erasing undesirable influence in diffusion models},
  author={Wu, Jing and Le, Trung and Hayat, Munawar and Harandi, Mehrtash},
  booktitle={Proc. CVPR},
  year={2025}
}

@article{douze2025faiss,
  title={The {FAISS} library},
  author={Douze, Matthijs and Guzhva, Alexandr and Deng, Chengqi and Johnson, Jeff and Szilvasy, Gergely and Mazar{\'e}, Pierre-Emmanuel and Lomeli, Maria and Hosseini, Lucas and J{\'e}gou, Herv{\'e}},
  journal={IEEE TBD},
  year={2025},
}

@inproceedings{unet,
  title={{U-Net}: Convolutional networks for biomedical image segmentation},
  author={Ronneberger, Olaf and Fischer, Philipp and Brox, Thomas},
  booktitle={Proc. MICCAI},
  year={2015}
}

@inproceedings{bahdanau2014neural,
  title={Neural machine translation by jointly learning to align and translate},
  author={Bahdanau, Dzmitry and Cho, Kyunghyun and Bengio, Yoshua},
    booktitle={Proc. ICLR},
  year={2015}
}

@inproceedings{bourtoule2021machine,
  title={Machine unlearning},
  author={Bourtoule, Lucas and Chandrasekaran, Varun and Choquette-Choo, Christopher A and Jia, Hengrui and Travers, Adelin and Zhang, Baiwu and Lie, David and Papernot, Nicolas},
  booktitle={IEEE SSP},
  year={2021}
}

@inproceedings{zheng2024imma,
  title={Imma: Immunizing text-to-image models against malicious adaptation},
  author={Zheng, Amber Yijia and Yeh, Raymond A},
  booktitle={Proc. ECCV},
  year={2024}
}

@inproceedings{zheng2024learning,
  title={Learning to obstruct few-shot image classification over restricted classes},
  author={Zheng, Amber Yijia and Yang, Chiao-An and Yeh, Raymond A},
  booktitle={Proc. ECCV},
  year={2024}
}

@inproceedings{zheng2025multi,
  title={Multi-concept model immunization through differentiable model merging},
  author={Zheng, Amber Yijia and Yeh, Raymond A},
  booktitle={Proc. AAAI},
  volume={39},
  year={2025}
}

@inproceedings{
    zheng2025model,
    title={Model Immunization from a Condition Number Perspective},
    author={Amber Yijia Zheng and Site Bai and Brian Bullins and Raymond A. Yeh},
    booktitle={Proc. ICML},
    year={2025},
}
